\documentclass{article}

\usepackage{microtype}
\usepackage{graphicx}
\usepackage{subcaption}
\usepackage{booktabs} 
\usepackage{multirow}
\usepackage{color}
\usepackage{colortbl}
\definecolor{mylightblue}{rgb}{0.8, 0.9, 1.0}
\usepackage{adjustbox}

\usepackage{hyperref}

\usepackage[accepted]{icml2026}

\usepackage{amsmath}
\usepackage{amssymb}
\usepackage{mathtools}
\usepackage{amsthm}

\usepackage[capitalize,noabbrev]{cleveref}

\theoremstyle{plain}
\newtheorem{theorem}{Theorem}[section]
\newtheorem{proposition}[theorem]{Proposition}
\newtheorem{lemma}[theorem]{Lemma}

\theoremstyle{definition}

\theoremstyle{remark}

\usepackage[textsize=tiny]{todonotes}

\icmltitlerunning{When Model Merging Breaks Routing: Training-Free Calibration for MoE}

\begin{document}

\twocolumn[
\icmltitle{When Model Merging Breaks Routing: Training-Free Calibration for MoE}



  \icmlsetsymbol{equal}{*}

  \begin{icmlauthorlist}
    \icmlauthor{Canbin Huang}{sysu}
    \icmlauthor{Tianyuan Shi}{sysu}
    \icmlauthor{Xiaojun Quan}{sysu,slai}
    \icmlauthor{Jingang Wang}{meituan}
    \icmlauthor{Jianfei Zhang}{meituan}
    \icmlauthor{Qifan Wang}{meta}
  \end{icmlauthorlist}

  \icmlaffiliation{sysu}{School of Computer Science and Engineering, Sun Yat-sen University, China}
  \icmlaffiliation{slai}{Shenzhen Loop Area Institute, China}
  \icmlaffiliation{meituan}{Meituan, Inc., China}
  \icmlaffiliation{meta}{Meta AI, USA}

  \icmlcorrespondingauthor{Xiaojun Quan}{xiaojunquan@slai.edu.cn}

  \icmlkeywords{Machine Learning, ICML}

  \vskip 0.3in
]



\printAffiliationsAndNotice{}  

\begin{abstract}
Model merging has emerged as a cost-effective approach for consolidating the capabilities of multiple LLMs without retraining. However, existing merging techniques, largely based on linear parameter arithmetic or optimization, struggle when applied to Mixture-of-Experts (MoE) architectures. We identify a critical failure mode in MoE merging, termed \emph{routing breakdown}, in which the merged router fails to dispatch tokens to suitable experts.
Routing breakdown stems from the sensitivity of the non-linear softmax and discrete Top-$k$ routing mechanisms to parameter perturbations from merging, a sensitivity further amplified by load-balancing constraints imposed during MoE pretraining. Because fine-tuned experts exhibit distinct specializations, even modest misrouting can cause severe performance degradation.
To address this issue, we propose Hessian-Aware Router Calibration (HARC), a training-free framework that leverages second-order curvature information to realign the merged router. This approach admits a closed-form solution that can be efficiently solved using a matrix-free conjugate gradient method.
Experiments on mathematical reasoning and code generation tasks show that HARC effectively mitigates routing breakdown across diverse MoE merging baselines and leads to substantial performance improvements.
Our code is available at \url{https://github.com/huangcb01/HARC}.
\end{abstract}

\section{Introduction}
\label{sec:introduction}

Model merging has emerged as an effective approach for consolidating the capabilities of multiple task-specific large language models (LLMs) into a single unified model. This avoids the prohibitive costs associated with joint multi-task training and the inference overhead of ensemble methods. Techniques such as Task Arithmetic~\cite{TaskArithmetic} and Ties-Merging~\cite{Ties} have demonstrated that parameter-space arithmetic can successfully combine specialized knowledge. However, most existing research primarily focuses on dense architectures and the alignment of linear layers~\cite{RegMean, WUDI}.

\begin{figure}[t]
    \centering
    \includegraphics[width=\linewidth]{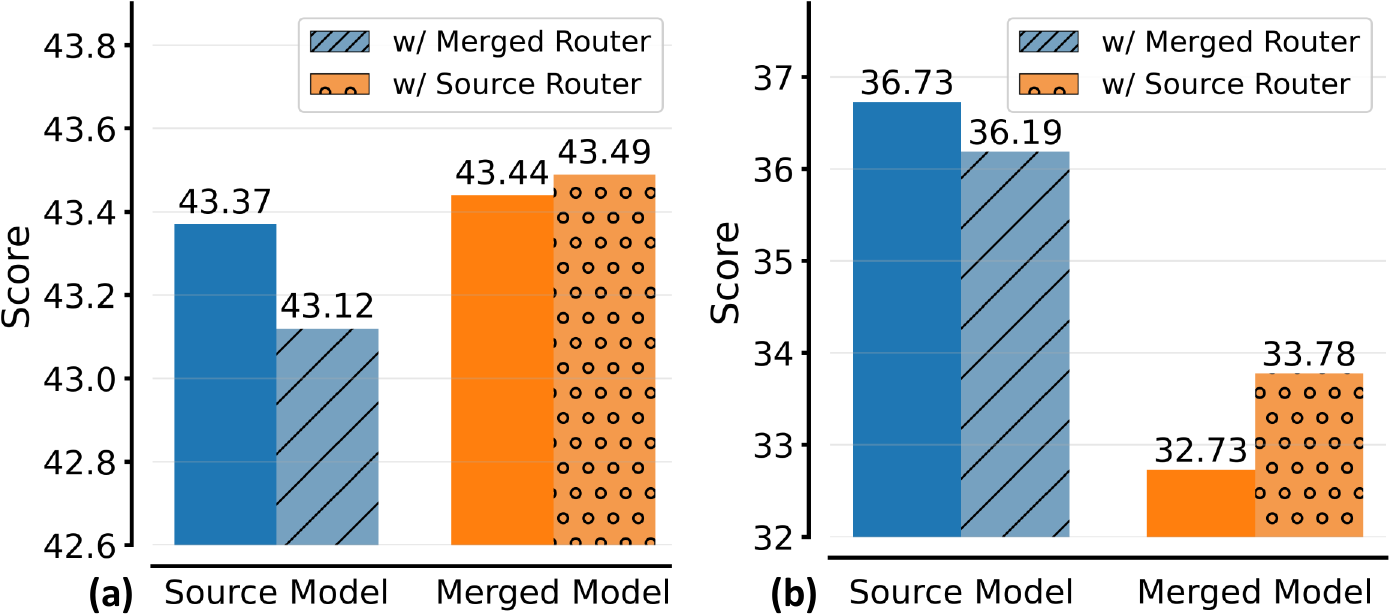}
    \caption{Illustration of routing breakdown using existing merging methods for MoE models. Cross-evaluation results on \textbf{(a)} mathematical and \textbf{(b)} code tasks demonstrate the impact of router swapping. Solid bars denote models operating with their default routers. Replacing the original router with the merged router (blue hatched bars) degrades the source model's performance, whereas restoring the source router to the merged model (orange dotted bars) significantly recovers its capability.}
    \label{fig:preliminary}
    \vspace{-4mm}
\end{figure}

As Mixture-of-Experts (MoE) architectures~\cite{MoE} gain wider adoption by enabling scalable parameter counts while preserving inference efficiency, the problem of merging sparse model components remains largely unexplored. Unlike dense models, MoE architectures rely on a routing module that dynamically dispatches tokens to a subset of experts. We argue that traditional linear model merging methods are ill-suited for this setting due to the inherent sensitivity of the routing mechanism. In particular, the router typically combines a softmax activation with a discrete Top-$k$ selection process, under which small perturbations in the router’s weights can be exponentially amplified. This effect is further exacerbated by load-balancing objectives used during pretraining, which can amplify instability and lead to degenerate routing behavior after merging. As a consequence, even minor merging-induced errors may result in incorrect expert assignments, leading to substantial divergence in output representations even when the expert parameters themselves are preserved. This motivates the following research question:
\textit{Does model merging destabilize routing in MoE architectures and degrade performance?}

To investigate this, we conducted a preliminary study by merging two OLMoE~\cite{OLMoE} models fine-tuned for mathematics and code generation, utilizing the state-of-the-art WUDI~\cite{WUDI} method. By tracking token-level expert assignments across all layers, we uncovered a severe \emph{expert mismatch} phenomenon: 50.11\% of expert selections deviate from the source models. This instability compounds rapidly through the network depth; across 16 layers, 99.98\% of tokens experience routing divergence, with an average of 8.02 mismatched decisions per token. To isolate routing mismatch as the primary driver of performance degradation, we performed a cross-evaluation by decoupling the routing module from the expert parameters. As illustrated in Figure~\ref{fig:preliminary}, equipping the merged model with the oracle source router yields substantial performance recovery, whereas applying the merged router to the original source model consistently harms results. These findings underscore that preserving the experts' capabilities is futile without ensuring consistent routing behavior, highlighting the urgent need for a specialized merging scheme for routers.

To address this challenge, we propose \textbf{Hessian-Aware Router Calibration (HARC)}, a training-free strategy designed to match the router’s output distributions before and after merging. This approach leverages a second-order Hessian approximation of the routing objective, enabling the derivation of a closed-form solution. Notably, HARC preserves the router’s output distributions without requiring gradient backpropagation, significantly reducing computational overhead. By directly optimizing routing behavior, it ensures that the merged model maintains consistent expert selection logic across different tasks. For the remaining layers, we employ existing dense merging techniques, which have been shown to be effective for linear layers. These methods ensure that specialized knowledge in dense layers is properly transferred, while the router’s complexity is handled by the proposed training-free strategy.

Empirical evaluations on the merging of math and code MoE models demonstrate that HARC effectively addresses the routing breakdown challenge, consistently improving over traditional model merging methods. Furthermore, our analysis highlights three key findings. \textit{First}, we observe that routing errors accumulate more markedly in deeper layers of the MoE model, particularly as experts become more specialized, which can lead to catastrophic semantic drift. HARC mitigates this by incorporating Hessian curvature information, suppressing this trend and maintaining alignment with the original routing behavior, especially in these critical deeper layers. \textit{Second}, our investigation into data composition reveals that using clean prompts for calibration outperforms using noisy or off-policy full text, suggesting that correctly routing tokens from the outset is crucial for maintaining a robust and consistent routing trajectory. \textit{Finally}, we demonstrate that HARC exhibits remarkable data efficiency, enabling the model to converge to near-optimal performance with around 40\% of the calibration samples.

\section{Related Work}
\label{sec:related-work}
\textbf{Parameter-Space Model Merging.}
Most traditional model merging methods operate in parameter space via simple arithmetic operations.
Model Soups~\cite{ModelSoups} shows that averaging the weights of models fine-tuned under different hyperparameters can improve accuracy and robustness.
Task Arithmetic~\cite{TaskArithmetic} formalizes this idea by representing task adaptations as \emph{task vectors} and composing them through vector arithmetic.
A common issue is that naive averaging can degrade performance due to parameter interference and sign disagreement.
To mitigate interference, TIES-Merging~\cite{Ties} eliminates redundant parameters and resolves sign conflicts through a {trim-elect-merge} procedure, while DARE~\cite{Dare} sparsifies delta parameters via random dropping and rescaling.
FuseChat~\cite{fusechat} proposes the SCE (Select, Calculate, Erase) method, which derives fine-grained merging coefficients from the magnitude of parameter updates and eliminates conflicting update directions.
DELLA-Merging~\cite{DELLA} refines pruning through magnitude-based sampling, and TALL-masks~\cite{TALL} constructs binary masks to stitch task-relevant parameter supports while filtering task-irrelevant noise.
AWD~\cite{AWD} decomposes task vectors into shared and task-specific components to mitigate interference via disentanglement.
Beyond heuristic filtering, DOGE~\cite{DOGE} formulates merging as constrained optimization in a shared subspace solved by projective gradient descent, while WUDI-Merging~\cite{WUDI} leverages approximate subspace structure of task vectors to optimize an objective that reduces interference.

\textbf{Data-Driven Model Merging.}
While parameter-space methods are efficient, they can miss functional interactions that emerge during inference.
Data-driven approaches incorporate input-side statistics or behavioral signals to guide merging.
Fisher Merging~\cite{Fisher} and RegMean~\cite{RegMean} reweight parameters using Fisher information or activation statistics.
AdaMerging~\cite{AdaMerging} learns task-specific coefficients via entropy minimization on unlabeled data.
Activation-Informed Merging~\cite{AIM} uses activation-space mutual information as a plug-and-play criterion for layer-wise coefficients.
Sens-Merging~\cite{Sens} estimates sensitivity using validation data to balance parameter updates.
NeuronMerge~\cite{NeuronMerge} clusters neurons by activation patterns before merging to better preserve diverse capabilities.
Leveraging Submodule Linearity~\cite{SubmoduleLinearity} further identifies locally linear regions within specific submodules (e.g., attention or MLP) and performs merging within these regions.

\textbf{Limitations of MoE Merging.}~Despite recent progress, existing techniques largely rely on the Linear Mode Connectivity (LMC) hypothesis~\cite{LMC}, which assumes that linear interpolation in parameter space yields smooth functional transitions. This assumption becomes brittle for Mixture-of-Experts (MoE) models, where inference critically depends on \emph{routing}: tokens are dispatched via nonlinear softmax scores followed by discrete Top-$k$ selection, inducing inherently non-smooth behavior. As a result, even modest perturbations to router parameters during merging can trigger a failure mode we term \emph{routing breakdown}, in which the merged router fails to consistently dispatch tokens to appropriate experts. This vulnerability is further exacerbated by load-balancing regularization used during MoE pretraining, which makes expert assignments highly sensitive to gating shifts. Since fine-tuned experts often encode specialized competencies, such misrouting can cause disproportionate and severe performance degradation. Collectively, these factors challenge the applicability of standard merging methods to MoE merging.

\section{Preliminaries}
\label{sec:preliminaries}

In this section, we formalize the Mixture-of-Experts (MoE) routing mechanism and analyze the theoretical limitations of applying existing linear merging techniques to the inherently non-linear routing module.

\subsection{MoE Routing Formulation}
Consider a standard MoE layer consisting of $K$ experts $\{E_1, \dots, E_K\}$ and a routing network. Let $\mathbf{x} \in \mathbb{R}^d$ represent the input representation, where $d$ is the hidden dimension. The routing module computes a probability distribution $\mathbf{r} \in \mathbb{R}^K$ over the experts using a learnable weight matrix $\mathbf{W} \in \mathbb{R}^{K \times d}$. The routing logits $\mathbf{z} = \mathbf{W} \mathbf{x}$ are passed through the softmax function to obtain the routing probabilities:
\begin{equation}
    r^{(k)} = \left[\operatorname{softmax}(\mathbf{z})\right]_k = \frac{e^{z^{(k)}}}{\sum_{j=1}^K e^{z^{(j)}}}.
\end{equation}

In practice, MoE models activate only a subset of experts with the highest routing probabilities (e.g., Top-$k$ gating). The output of the MoE layer is thus computed as:
\begin{equation}
    \mathbf{y} = \sum_{k \in \mathcal{T}} r^{(k)} E_k(\mathbf{x}),
\end{equation}
where $\mathcal{T} = \operatorname{Top-}k(\mathbf{r})$ denotes the set of selected experts.

\subsection{The Non-linearity Mismatch in MoE Merging}

Existing model merging methods, such as Task Arithmetic~\cite{TaskArithmetic} and RegMean~\cite{RegMean}, typically rely on Linear Mode Connectivity (LMC)~\cite{LMC}, assuming that linear interpolation in parameter space approximates functional outcomes. While effective for dense architectures, this approach presumes a smooth, locally linear optimization landscape.

However, such assumptions break down in MoE architectures due to the non-linear routing mechanism. Consider a collection of task-specific MoE models $\{\mathcal{M}_i\}_{i=1}^D$, each with its own routing matrix $\mathbf{W}_i$. For a given input $\mathbf{x}$, the routing logit vector of model $\mathcal{M}_i$ is computed as $\mathbf{z}_i = \mathbf{W}_i \mathbf{x}$, producing the routing probability distribution $\mathbf{r}_i = \operatorname{softmax}(\mathbf{z}_i)$.

Most linear merging approaches can be unified as a weighted aggregation in the parameter space: $\mathbf{W}_{\text{m}} = \sum_{i=1}^D \lambda_i \mathbf{W}_i$, where $\lambda_i$ is the merging coefficient for model $\mathcal{M}_i$. This linear interpolation naturally extends to the logits, yielding $\mathbf{z}_{\text{m}} = \mathbf{W}_{\text{m}} \mathbf{x} = \sum_{i=1}^D \lambda_i \mathbf{z}_i$. However, applying the softmax activation to these aggregated logits introduces a critical \textbf{non-linearity mismatch}. Due to the convexity of the log-sum-exp operation underlying softmax normalization, the routing distribution of the merged model, $\mathbf{r}_{\mathrm{m}}$, is not equivalent to the aggregated routing distributions of the source models:
\begin{equation}
    \mathbf{r}_{\mathrm{m}} = \operatorname{softmax}\!\left( \sum_{i=1}^D \lambda_i \mathbf{z}_i \right)
    \neq
    \sum_{i=1}^D \lambda_i \operatorname{softmax}(\mathbf{z}_i).
\end{equation}

This discrepancy is further amplified by the discrete Top-$k$ selection. Let $\mathcal{T}_i = \operatorname{Top-}k(\mathbf{r}_i)$ and $\mathcal{T}_{\mathrm{m}} = \operatorname{Top-}k(\mathbf{r}_{\mathrm{m}})$ denote the expert sets selected by the source model $\mathcal{M}_i$ and the merged model, respectively. Because the softmax output is highly sensitive to logit perturbations, even minor discrepancies can alter the discrete ranking ($\mathcal{T}_{\mathrm{m}} \neq \mathcal{T}_i$), causing \emph{expert mismatch}. Consequently, the merged model may activate experts that are not optimized for the current input, leading to severely degraded representations.

These observations highlight that the central challenge lies in preserving routing behavior. Effective merging must explicitly account for the alignment of routing distributions, rather than relying solely on linear parameter aggregation.

\section{Methodology}
\label{sec:methodology}

\begin{figure*}[t]
    \centering
    \includegraphics[width=0.87\linewidth]{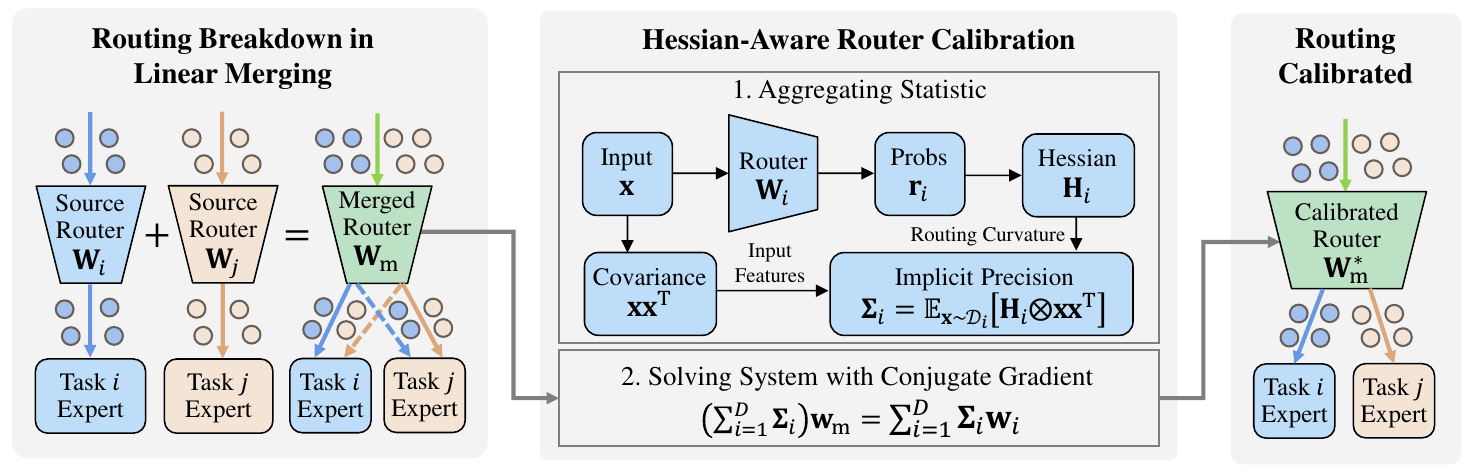}
    \caption{Overview of the Hessian-Aware Router Calibration (HARC) framework. \textbf{(Left)}~Linear merging of source routers ($\mathbf{W}_i, \mathbf{W}_j$) disrupts the non-linear gating dynamics, leading to \emph{routing breakdown} where tokens are misrouted to incorrect experts (dashed lines). \textbf{(Middle)}~HARC addresses this by aggregating second-order statistics, the Hessian matrix $\mathbf{H}_i$ (routing curvature) and input covariance $\mathbf{x}\mathbf{x}^\top$, to form an implicit precision matrix $\boldsymbol{\Sigma}_i$ via Kronecker products. The optimal merged weights $\mathbf{w}_{\mathrm{m}}$ are then solved efficiently using a matrix-free conjugate gradient algorithm. \textbf{(Right)}~The resulting calibrated router restores correct expert assignments for each task.}
    \label{fig:method}
    \vspace{-2mm}
\end{figure*}

In this section, we introduce the \textbf{Hessian-Aware Router Calibration (HARC)} framework. We begin by formalizing the routing interference problem as a distribution alignment task. To overcome the intractability of the non-linear objective, we provide a rigorous theoretical analysis that establishes a quadratic approximation of the divergence via a second-order expansion. Building on this theoretical insight, we derive a closed-form solution for optimal router merging. Finally, to ensure scalability for large-scale models, we introduce a matrix-free conjugate gradient algorithm that efficiently solves the optimization problem without explicit matrix materialization.

\subsection{Problem Formulation}
\label{sec:problem_formulation}

The core challenge in MoE merging is to ensure that the merged router preserves the expert selection logic of the original task-specific routers. Naive averaging disrupts this logic due to the non-linearity of the softmax function. Therefore, our objective is to find a unified $\mathbf{W}_{\text{m}}$ that minimizes the discrepancy between the merged routing behavior and the individual task behaviors. 

Let $\mathcal{D}_i$ denote the data distribution associated with task $i$. We formulate this objective by minimizing the expected Kullback-Leibler (KL) divergence over all tasks:
\begin{equation}
        \mathbf{W}_{\text{m}}^* = \arg\min_{\mathbf{W}_{\text{m}}} \sum_{i=1}^D \mathbb{E}_{\mathbf{x} \sim \mathcal{D}_i} \left[ \operatorname{KL}\!\left( \mathbf{r}_i(\mathbf{x}) \,\middle\|\, \mathbf{r}_{\text{m}}(\mathbf{x}; \mathbf{W}_{\text{m}}) \right) \right],
    \label{eq:objective}
\end{equation}
where $\mathbf{r}_{\text{m}}(\mathbf{x}; \mathbf{W}_{\text{m}}) = \operatorname{softmax}(\mathbf{W}_{\text{m}} \mathbf{x})$ denotes the routing distribution parameterized by $\mathbf{W}_{\text{m}}$. This objective encourages the merged router to match the full probability distribution of the source routers, rather than just their logits.

\subsection{Theoretical Analysis}

Directly optimizing the KL divergence is computationally intractable because the non-linearity of the softmax function prevents a simple analytical solution. To address this, we analyze the local geometry of the routing manifold using a second-order Taylor approximation.

\begin{lemma}[Quadratic Approximation of Routing Divergence]
\label{lem:quadratic}
Let $\mathbf{r}_i$ and $\mathbf{r}_{\text{m}}$ be the probability distributions output by the source and merged routers, respectively. Assuming that the merged logits $\mathbf{z}_{\text{m}}$ lie within the local neighborhood of the source logits $\mathbf{z}_i$, the KL divergence can be effectively approximated by a weighted quadratic form:
\begin{equation}
    \operatorname{KL}(\mathbf{r}_i \| \mathbf{r}_{\text{m}}) \approx \frac{1}{2} (\mathbf{z}_{\text{m}} - \mathbf{z}_i)^\top \mathbf{H}_i (\mathbf{z}_{\text{m}} - \mathbf{z}_i),
\end{equation}
where $\mathbf{H}_i = \operatorname{diag}(\mathbf{r}_i) - \mathbf{r}_i \mathbf{r}_i^\top$ is the Hessian matrix of the log-partition function.
\end{lemma}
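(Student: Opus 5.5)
We first write the KL divergence between two softmax distributions entirely in terms of logits, using the log-partition function $A(\mathbf{z}) = \log \sum_{j=1}^K e^{z^{(j)}}$. Since $\log r_i^{(k)} = z_i^{(k)} - A(\mathbf{z}_i)$ and $\log r_{\text{m}}^{(k)} = z_{\text{m}}^{(k)} - A(\mathbf{z}_{\text{m}})$, and $\sum_k r_i^{(k)} = 1$, we have
\begin{equation*}
\operatorname{KL}(\mathbf{r}_i \| \mathbf{r}_{\text{m}}) = A(\mathbf{z}_{\text{m}}) - A(\mathbf{z}_i) - \mathbf{r}_i^\top(\mathbf{z}_{\text{m}} - \mathbf{z}_i).
\end{equation*}
We then use the standard exponential-family facts $\nabla A(\mathbf{z}) = \operatorname{softmax}(\mathbf{z})$ and $\nabla^2 A(\mathbf{z}) = \operatorname{diag}(\operatorname{softmax}(\mathbf{z})) - \operatorname{softmax}(\mathbf{z})\operatorname{softmax}(\mathbf{z})^\top$; both follow by direct differentiation. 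With $\nabla A(\mathbf{z}_i) = \mathbf{r}_i$, the displayed identity says the KL divergence is exactly the Bregman divergence of $A$ between $\mathbf{z}_{\text{m}}$ and $\mathbf{z}_i$.

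Next we Taylor-expand $A$ around $\mathbf{z}_i$ with $\boldsymbol{\delta} = \mathbf{z}_{\text{m}} - \mathbf{z}_i$:
\begin{equation*}
A(\mathbf{z}_i + \boldsymbol{\delta}) = A(\mathbf{z}_i) + \mathbf{r}_i^\top \boldsymbol{\delta} + \tfrac{1}{2}\boldsymbol{\delta}^\top \mathbf{H}_i \boldsymbol{\delta} + R_3(\boldsymbol{\delta}).
\end{equation*}
Substituting this expansion into the Bregman identity, the zeroth- and first-order terms cancel exactly. What remains is $\operatorname{KL} = \tfrac12 \boldsymbol{\delta}^\top \mathbf{H}_i \boldsymbol{\delta} + R_3(\boldsymbol{\delta})$, which is the claimed quadratic form.

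The only real work is controlling the remainder, which is what gives content to the local-neighborhood assumption. We use the integral (or Lagrange) form
\begin{equation*}
R_3(\boldsymbol{\delta}) = \tfrac12 \boldsymbol{\delta}^\top\big(\nabla^2 A(\mathbf{z}_i + t\boldsymbol{\delta}) - \mathbf{H}_i\big)\boldsymbol{\delta}
\end{equation*}
for some $t\in[0,1]$. The third derivative of $A$ is the third cumulant tensor of a categorical distribution, so its entries are bounded by an absolute constant. This gives $|R_3(\boldsymbol{\delta})| \le C\|\boldsymbol{\delta}\|_\infty\|\boldsymbol{\delta}\|_2^2$ for an absolute constant $C$, so the remainder is $O(\|\boldsymbol{\delta}\|^3)$ and negligible relative to the quadratic term as $\boldsymbol{\delta}\to 0$.

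One subtlety needs a remark. $\mathbf{H}_i$ is singular along $\mathbf{1}$, reflecting the shift invariance of softmax. This is harmless because both sides vanish for $\boldsymbol{\delta}\propto\mathbf{1}$, so the approximation is meaningful on the complement of $\mathbf{1}$. Finally, summing the per-token approximation over $\mathbf{x}\sim\mathcal{D}_i$ with $\boldsymbol{\delta} = (\mathbf{W}_{\text{m}} - \mathbf{W}_i)\mathbf{x}$ gives the quadratic surrogate of \eqref{eq:objective} that the method optimizes.
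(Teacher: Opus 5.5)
Your proof is correct and follows the same route as the paper. Both write the KL divergence as the Bregman divergence of the log-partition function, Taylor-expand around $\mathbf{z}_i$ using $\nabla f(\mathbf{z}_i)=\mathbf{r}_i$ and $\nabla^2 f(\mathbf{z}_i)=\mathbf{H}_i$, and let the zeroth- and first-order terms cancel. Your remainder bound and your remark on the null direction $\mathbf{1}$ go beyond the paper, which justifies the local assumption only empirically. One small point: entrywise bounds on the third cumulant tensor alone give only a $K$-dependent constant. An absolute $C$ comes from the representation $\nabla^3 A(\mathbf{z})[\boldsymbol{\delta},\boldsymbol{\delta},\boldsymbol{\delta}]=\mathbb{E}_{Y\sim\operatorname{softmax}(\mathbf{z})}\bigl[(\delta_Y-\mathbb{E}\,\delta_Y)^3\bigr]$, and this does not affect your $O(\|\boldsymbol{\delta}\|^3)$ conclusion.
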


\textit{Proof.} The proof follows from viewing the KL divergence as a Bregman divergence generated by the log-partition function $f(\mathbf{z}) = \log \sum_{j} e^{z^{(j)}}$. We perform a second-order Taylor expansion of $f(\mathbf{z}_{\text{m}})$ around $\mathbf{z}_i$. Since the gradient is $\nabla f(\mathbf{z}_i) = \mathbf{r}_i$ and the Hessian is $\nabla^2 f(\mathbf{z}_i) = \mathbf{H}_i$, the first-order terms cancel out, leaving the quadratic term as the dominant component of the error. (See Appendix~\ref{sec:proof lemma1} for the detailed derivation and validation of the local assumption).

\textbf{Interpretation.} Lemma 1 reveals that the Hessian $\mathbf{H}_i$ acts as a structure-aware metric. Its \textbf{diagonal terms}, $[\mathbf{H}_i]_{kk} = r_i^{(k)}(1 - r_i^{(k)})$, function as an attention mechanism. Because MoE routers distribute probabilities across many experts under load-balancing constraints, individual probabilities typically fall well below $0.5$. In this operating regime, the diagonal weight increases monotonically with $r_i^{(k)}$, naturally assigning higher alignment penalties to the active experts while suppressing noise from inactive ones. Crucially, the \textbf{off-diagonal terms}, $[\mathbf{H}_i]_{kj} = -r_i^{(k)} r_i^{(j)}$, explicitly capture the competitive dynamics induced by the softmax constraint ($\sum r=1$). By modeling this dense curvature, HARC preserves the precise relative ranking between experts, which diagonal approximations fail to capture.

Substituting the linear relationship $\mathbf{z} = \mathbf{W}\mathbf{x}$ into Lemma 1 allows us to reformulate the global objective into a solvable least-squares problem.

\begin{theorem}[Optimal Hessian-Aware Merging]
\label{thm:optimal}
Let $\mathbf{w} = \operatorname{vec}(\mathbf{W}^\top) \in \mathbb{R}^{Kd}$ be the vectorized form of the router parameters. We define the task-specific \textbf{precision matrix} $\boldsymbol{\Sigma}_i$ as the expectation of the Kronecker product between the routing Hessian and the input covariance:
\begin{equation}
    \boldsymbol{\Sigma}_i = \mathbb{E}_{\mathbf{x} \sim \mathcal{D}_i}[\mathbf{H}_i \otimes \mathbf{x} \mathbf{x}^\top].
\end{equation}
The merged weights $\mathbf{w}_{\text{m}}^*$ that minimize the quadratic approximation of the total interference are given by the solution to the following linear system:
\begin{equation}
    \left(\sum_{i=1}^D \boldsymbol{\Sigma}_i\right) \mathbf{w}_{\text{m}}^* = \sum_{i=1}^D \boldsymbol{\Sigma}_i \mathbf{w}_i.
    \label{eq:theorem_solution}
\end{equation}
\end{theorem}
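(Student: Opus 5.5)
Starting from Lemma~\ref{lem:quadratic}, I replace each KL term in \eqref{eq:objective} by its quadratic surrogate. This gives the approximate objective
\begin{equation*}
\mathcal{L}(\mathbf{W}_{\text{m}}) = \sum_{i=1}^D \mathbb{E}_{\mathbf{x}\sim\mathcal{D}_i}\Big[\tfrac12 (\mathbf{z}_{\text{m}}-\mathbf{z}_i)^\top \mathbf{H}_i (\mathbf{z}_{\text{m}}-\mathbf{z}_i)\Big],
\end{equation*}
where $\mathbf{z}_{\text{m}}-\mathbf{z}_i = (\mathbf{W}_{\text{m}}-\mathbf{W}_i)\mathbf{x}$. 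Here $\mathbf{H}_i=\mathbf{H}_i(\mathbf{x})$ is evaluated at the fixed source logits $\mathbf{z}_i=\mathbf{W}_i\mathbf{x}$, so it does not depend on $\mathbf{W}_{\text{m}}$. This is the structural fact that makes the surrogate a genuine quadratic in the unknown.

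Next I vectorize. Write $\Delta_i = \mathbf{W}_{\text{m}}-\mathbf{W}_i$ and $\boldsymbol{\delta}_i = \operatorname{vec}(\Delta_i^\top) = \mathbf{w}_{\text{m}}-\mathbf{w}_i$. Then
\begin{equation*}
\Delta_i\mathbf{x} = (\mathbf{I}_K\otimes \mathbf{x}^\top)\boldsymbol{\delta}_i,
\end{equation*}
since the $k$-th block of $\boldsymbol{\delta}_i$ is the $k$-th row of $\Delta_i$. The mixed-product property of the Kronecker product then gives
\begin{equation*}
(\mathbf{I}_K\otimes\mathbf{x})\mathbf{H}_i(\mathbf{I}_K\otimes\mathbf{x}^\top) = \mathbf{H}_i\otimes \mathbf{x}\mathbf{x}^\top,
\end{equation*}
so each per-sample term equals $\tfrac12\boldsymbol{\delta}_i^\top(\mathbf{H}_i\otimes\mathbf{x}\mathbf{x}^\top)\boldsymbol{\delta}_i$. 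Because the expectation is linear and $\boldsymbol{\delta}_i$ does not depend on $\mathbf{x}$, I pull it outside and obtain
\begin{equation*}
\mathcal{L} = \tfrac12\sum_i (\mathbf{w}_{\text{m}}-\mathbf{w}_i)^\top\boldsymbol{\Sigma}_i(\mathbf{w}_{\text{m}}-\mathbf{w}_i).
\end{equation*}
The main thing to get right is the vectorization convention: with $\operatorname{vec}(\mathbf{W}^\top)$ the ordering is $\mathbf{H}\otimes\mathbf{x}\mathbf{x}^\top$ rather than the reverse. I would verify this entrywise: the $((k,a),(j,b))$ entry should equal $\mathbb{E}[H^{(kj)}x_a x_b]$.

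Each $\boldsymbol{\Sigma}_i$ is positive semidefinite. The reason is that $\mathbf{H}_i$ is the covariance matrix of a categorical distribution, hence PSD; $\mathbf{x}\mathbf{x}^\top$ is PSD; a Kronecker product of PSD matrices is PSD; and expectations preserve PSD-ness. So $\mathcal{L}$ is a convex quadratic. Setting its gradient $\sum_i\boldsymbol{\Sigma}_i(\mathbf{w}_{\text{m}}-\mathbf{w}_i)$ to zero gives exactly \eqref{eq:theorem_solution}, and by convexity every stationary point is a global minimizer.

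The remaining subtlety is singularity. $\mathbf{H}_i\mathbf{1}=\mathbf{0}$, so $\sum_i\boldsymbol{\Sigma}_i$ annihilates any $\mathbf{1}\otimes\mathbf{v}$. This reflects the shift invariance of the softmax: adding the same vector $\mathbf{v}$ to every row of $\mathbf{W}_{\text{m}}$ leaves the routing unchanged. I would handle this by noting that the right-hand side of \eqref{eq:theorem_solution} lies in the range of $\sum_i\boldsymbol{\Sigma}_i$. This holds because each $\boldsymbol{\Sigma}_i\mathbf{w}_i$ lies in the range of $\boldsymbol{\Sigma}_i$, which is contained in the range of $\sum_j\boldsymbol{\Sigma}_j$ since all summands are PSD. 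Hence the system is always consistent, and all of its solutions are minimizers that differ only in directions that do not change the routing distributions. I would therefore state the theorem as characterizing the minimizers, with uniqueness holding modulo this null space, or exactly if a small ridge term is added.
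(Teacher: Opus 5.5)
Your proposal is correct and follows the same route as the paper's proof: replace each KL term by the quadratic surrogate of Lemma~\ref{lem:quadratic}, write $(\mathbf{W}_{\text{m}}-\mathbf{W}_i)\mathbf{x}=(\mathbf{I}_K\otimes\mathbf{x}^\top)(\mathbf{w}_{\text{m}}-\mathbf{w}_i)$, use the mixed-product rule to obtain $\mathbf{H}_i\otimes\mathbf{x}\mathbf{x}^\top$, take expectations, and set the gradient to zero. Your additions are correct and sharpen the paper's argument, which simply assumes invertibility or regularization: positive semidefiniteness of each $\boldsymbol{\Sigma}_i$ makes the stationary point a global minimizer, and $\mathbf{H}_i\mathbf{1}=\mathbf{0}$ makes $\sum_i\boldsymbol{\Sigma}_i$ always singular while the system stays consistent.
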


\textit{Proof.}
Let $\mathcal{J}(\mathbf{w}_{\text{m}})$ denote the total objective function defined in Eq.~\ref{eq:objective}. By substituting the quadratic approximation from Lemma 1 and utilizing the vectorization identity $(\mathbf{W}_{\text{m}} - \mathbf{W}_i) \mathbf{x} = (\mathbf{I}_K \otimes \mathbf{x}^\top) (\mathbf{w}_{\text{m}} - \mathbf{w}_i)$, the expected divergence $\mathcal{J}(\mathbf{w}_{\text{m}})$ transforms into a weighted squared error:
\begin{equation}
    \mathcal{J}(\mathbf{w}_{\text{m}}) \approx \sum_{i=1}^D \frac{1}{2} (\mathbf{w}_{\text{m}} - \mathbf{w}_i)^\top \boldsymbol{\Sigma}_i (\mathbf{w}_{\text{m}} - \mathbf{w}_i).
\end{equation}
Setting the gradient $\nabla_{\mathbf{w}_{\text{m}}} \mathcal{J}$ to zero yields the normal equations presented in Eq.~\ref{eq:theorem_solution}. The optimal solution $\mathbf{w}_{\text{m}}^* = (\sum \boldsymbol{\Sigma}_i)^{-1} (\sum \boldsymbol{\Sigma}_i \mathbf{w}_i)$ can be interpreted as a \textbf{Hessian-weighted average}, where parameter directions are weighted by their importance to the correct routing decisions (See Appendix \ref{sec:proof theorem1} for detailed derivation).

\subsection{Efficient Computation via Matrix-Free Conjugate Gradient}
\label{sec:efficient-computation}

While Theorem 1 provides a theoretically optimal closed-form solution, explicitly computing and inverting the aggregate precision matrix $\boldsymbol{\Sigma} = \sum \boldsymbol{\Sigma}_i$ is impractical. The matrix $\boldsymbol{\Sigma}$ has dimensions $Kd \times Kd$, leading to a prohibitive space complexity of $O(K^2 d^2)$. To make our method scalable to modern MoE architectures, we propose a matrix-free optimization strategy reinforced by diagonal regularization.

\subsubsection{Diagonal Regularization}
In practice, the routing distribution is often sparse (due to Top-$k$ gating), causing the Hessian $\mathbf{H}_i$ to be rank-deficient. Direct inversion in such ill-conditioned settings leads to numerical instability. To address this, we introduce a regularization term incorporating a prior belief that parameter interactions are locally independent. This stabilizes the solution in directions where the Hessian has low curvature (i.e., high uncertainty). The regularized objective is given by:
\begin{equation}
  \label{eq:regularized_objective}
  \begin{split}
    \mathcal{J}_{\text{reg}}(\mathbf{w}_{\text{m}}) = &\sum_{i=1}^D \frac{1}{2} (\mathbf{w}_{\text{m}} - \mathbf{w}_i)^\top \boldsymbol{\Sigma}_i (\mathbf{w}_{\text{m}} - \mathbf{w}_i) \\
    & + \gamma \sum_{i=1}^D (\mathbf{w}_{\text{m}} - \mathbf{w}_i)^\top \boldsymbol{\Lambda}_i (\mathbf{w}_{\text{m}} - \mathbf{w}_i),
  \end{split}
\end{equation}
where $\boldsymbol{\Lambda}_i = \operatorname{diag}(\boldsymbol{\Sigma}_i)$ extracts the diagonal elements of the precision matrix, and $\gamma > 0$ is a hyperparameter controlling the regularization strength.

The addition of $\gamma \boldsymbol{\Lambda}_i$ strengthens the diagonal dominance of the system. Following RegMean~\cite{RegMean}, we formulate the final precision matrix by interpolating between the full covariance and its diagonal approximation, rather than simply adding to the diagonal. The resulting regularized precision matrix $\boldsymbol{\Sigma}_{\text{reg},i}$ is defined as:
\begin{equation}
    \boldsymbol{\Sigma}_{\text{reg},i} = \alpha \boldsymbol{\Sigma}_i + (1 - \alpha) \boldsymbol{\Lambda}_i,
    \label{eq:regularization}
\end{equation}
where $\alpha \in [0, 1]$ is a damping factor derived from $\gamma$. When $\alpha = 1$, we use the full precision matrix; when $\alpha = 0$, the method reduces to a diagonal approximation (similar to Fisher Merging\cite{Fisher}). This regularization leads to the final linear system solved by our algorithm:
\begin{equation}
  \left( \sum_{i=1}^D \boldsymbol{\Sigma}_{\text{reg},i} \right) \mathbf{w}_{\text{m}} = \sum_{i=1}^D \boldsymbol{\Sigma}_{\text{reg},i} \mathbf{w}_i.
  \label{eq:final_system}
\end{equation}

\subsubsection{Matrix-Free Conjugate Gradient}
To avoid the prohibitive cost of explicitly constructing $\boldsymbol{\Sigma} = \sum_i \boldsymbol{\Sigma}_{\text{reg},i}$, we leverage the fact that the CG algorithm only requires matrix-vector products $\boldsymbol{\Sigma}_{\text{reg}, i} \mathbf{w}$ rather than the full matrix. We therefore derive a \emph{matrix-free} formulation to efficiently compute these products, enabling CG to solve Eq.~\ref{eq:final_system} without explicitly constructing $\boldsymbol{\Sigma}_{\text{reg}, i}$.

\begin{proposition}[Matrix-Free Linear Operator]
\label{pro:matrix-free}
The matrix-vector product $\boldsymbol{\Sigma}_{\text{reg}, i} \mathbf{w}$ can be computed without materializing the full $Kd \times Kd$ matrix. Specifically, for a given input $\mathbf{x}$ and router weights $\mathbf{W}$ (where $\mathbf{w} = \operatorname{vec}(\mathbf{W}^\top)$), we have:
\begin{equation}
    \label{eq:prop_identity}
    \resizebox{\linewidth}{!}{$%
        \begin{aligned}
            \boldsymbol{\Sigma}_{\text{reg}, i} \mathbf{w} = &\mathbb{E}_{\mathbf{x} \sim \mathcal{D}_i} \left[ \alpha \cdot \operatorname{vec}\!\left( \mathbf{x} \left( (\mathbf{W} \mathbf{x})^\top \mathbf{H}_i \right) \right) \right. \\
            &\left. + (1 - \alpha) \cdot \operatorname{vec}\!\left( \operatorname{diag}(\mathbf{x} \odot \mathbf{x}) \, \mathbf{W}^\top \, \operatorname{diag}(\mathbf{H}_i) \right) \right]
        \end{aligned}
    $}
\end{equation}
where $\odot$ denotes element-wise multiplication.
\end{proposition}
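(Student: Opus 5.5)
The plan is to prove the identity pointwise in $\mathbf{x}$ and then pass to the expectation by linearity. By definition, $\boldsymbol{\Sigma}_{\text{reg},i} = \alpha \boldsymbol{\Sigma}_i + (1-\alpha)\boldsymbol{\Lambda}_i$. Since taking the diagonal is linear and commutes with expectation, we have
$\boldsymbol{\Lambda}_i = \operatorname{diag}(\mathbb{E}[\mathbf{H}_i \otimes \mathbf{x}\mathbf{x}^\top]) = \mathbb{E}[\operatorname{diag}(\mathbf{H}_i \otimes \mathbf{x}\mathbf{x}^\top)]$.
Here $\mathbf{H}_i = \mathbf{H}_i(\mathbf{x})$ depends on the input. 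Hence
$\boldsymbol{\Sigma}_{\text{reg},i}\mathbf{w} = \mathbb{E}_{\mathbf{x}\sim\mathcal{D}_i}\big[\alpha (\mathbf{H}_i\otimes \mathbf{x}\mathbf{x}^\top)\mathbf{w} + (1-\alpha)\operatorname{diag}(\mathbf{H}_i\otimes\mathbf{x}\mathbf{x}^\top)\mathbf{w}\big]$.
It therefore suffices to verify the two bracketed terms separately for a fixed $\mathbf{x}$.

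\textbf{Full term.} I will use the standard identity $(\mathbf{A}\otimes\mathbf{B})\operatorname{vec}(\mathbf{X}) = \operatorname{vec}(\mathbf{B}\mathbf{X}\mathbf{A}^\top)$ for column-stacking $\operatorname{vec}$. Apply it with $\mathbf{A}=\mathbf{H}_i$ ($K\times K$), $\mathbf{B}=\mathbf{x}\mathbf{x}^\top$ ($d\times d$) and $\mathbf{X}=\mathbf{W}^\top$ ($d\times K$). This gives
$(\mathbf{H}_i\otimes\mathbf{x}\mathbf{x}^\top)\mathbf{w} = \operatorname{vec}(\mathbf{x}\mathbf{x}^\top\mathbf{W}^\top\mathbf{H}_i^\top) = \operatorname{vec}\big(\mathbf{x}\,((\mathbf{W}\mathbf{x})^\top\mathbf{H}_i)\big)$,
because $\mathbf{H}_i=\operatorname{diag}(\mathbf{r}_i)-\mathbf{r}_i\mathbf{r}_i^\top$ is symmetric. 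As a consistency check on the convention, the same identity with $\mathbf{A}=\mathbf{I}_K$, $\mathbf{B}=\mathbf{x}^\top$ recovers the relation $(\mathbf{I}_K\otimes\mathbf{x}^\top)\mathbf{w}=\mathbf{W}\mathbf{x}$ used in the proof of Theorem~\ref{thm:optimal}. This confirms that $\mathbf{w}=\operatorname{vec}(\mathbf{W}^\top)$ is compatible with the ordering $\mathbf{H}_i\otimes\mathbf{x}\mathbf{x}^\top$.

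\textbf{Diagonal term.} The diagonal of a Kronecker product is the Kronecker product of the diagonals. The entry of $\mathbf{H}_i\otimes\mathbf{x}\mathbf{x}^\top$ at position $(k-1)d+j$ on the diagonal is $[\mathbf{H}_i]_{kk}\,x_j^2$. Under column-stacking, position $(k-1)d+j$ of $\mathbf{w}$ is $[\mathbf{W}^\top]_{jk}=W_{kj}$. Multiplying entrywise, the result is the vectorization of the $d\times K$ matrix with entries $x_j^2\,W_{kj}\,[\mathbf{H}_i]_{kk}$. Left-multiplying $\mathbf{W}^\top$ by $\operatorname{diag}(\mathbf{x}\odot\mathbf{x})$ scales its rows by $x_j^2$, and right-multiplying by the diagonal matrix $\operatorname{diag}(\mathbf{H}_i)$ scales its columns by $[\mathbf{H}_i]_{kk}$. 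So that entrywise matrix is exactly $\operatorname{diag}(\mathbf{x}\odot\mathbf{x})\,\mathbf{W}^\top\operatorname{diag}(\mathbf{H}_i)$, matching the second term of the proposition.

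Combining the two terms and taking expectations yields the stated identity. The right-hand side involves only $d\times K$ objects, namely an outer product of $\mathbf{x}$ with the $K$-vector $\mathbf{H}_i\mathbf{W}\mathbf{x}$ and two diagonal scalings. Therefore no $Kd\times Kd$ matrix is ever formed, and the cost per sample is $O(Kd+K^2)$.

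The main obstacle I expect is bookkeeping rather than substance. The index ordering of $\operatorname{vec}(\mathbf{W}^\top)$ must agree with the block structure of $\mathbf{H}_i\otimes\mathbf{x}\mathbf{x}^\top$, and $\operatorname{diag}(\mathbf{H}_i)$ in the statement must be read as the diagonal matrix formed from the diagonal entries of $\mathbf{H}_i$. Both points are settled by the explicit index check above.
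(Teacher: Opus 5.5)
Your proposal is correct and follows the paper's derivation. Like the paper, you split $\boldsymbol{\Sigma}_{\text{reg},i}\mathbf{w}$ into dense and diagonal parts inside the expectation, handle the dense term with $(\mathbf{A}\otimes\mathbf{B})\operatorname{vec}(\mathbf{X})=\operatorname{vec}(\mathbf{B}\mathbf{X}\mathbf{A}^\top)$, and reduce the diagonal term to an entrywise scaling; you also make explicit two points the paper leaves implicit, namely the symmetry of $\mathbf{H}_i$ that removes the transpose and the index ordering of $\operatorname{vec}(\mathbf{W}^\top)$.
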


\textit{Derivation.} By applying the Kronecker--vectorization identity, the dense term decouples $\mathbf{H}_i$ and $\mathbf{x}\mathbf{x}^\top$ into a sequence of efficient matrix--vector products via associativity. Meanwhile, the diagonal Kronecker property reduces the regularization term to element-wise scaling. (See Appendix~\ref{sec:proof proposition1} for the complete derivation.)

\textbf{Significance.} This decomposition reduces both the time and space complexity from quadratic $\mathcal{O}(K^2 d^2)$ to linear $\mathcal{O}(Kd)$ with respect to the hidden dimension. It allows us to employ the CG algorithm to iteratively solve for $\mathbf{w}_{\text{m}}^*$ with high efficiency, scaling seamlessly to large models.

{
\setlength{\textfloatsep}{8pt plus 1pt minus 1pt}
\begin{algorithm}[t]
  \caption{Hessian-Aware Router Calibration (HARC)}
  \label{alg:moe_merge}
  \begin{algorithmic}[1]
    \STATE {\bfseries Input:} Source models $\{\mathcal{M}_i\}_{i=1}^D$, uncalibrated merged model $\mathcal{M}_{\text{base}}$, calibration datasets $\{\mathcal{D}_i\}_{i=1}^D$, damping factor $\alpha$
    \STATE {\bfseries Output:} Calibrated model $\mathcal{M}_{\text{m}}$
    \STATE \textbf{Initialize:} $\mathcal{M}_{\text{m}} \leftarrow \mathcal{M}_{\text{base}}$
    \FOR{layer $l = 1$ to $L$}
      \STATE Extract current router weights $\mathbf{W}_{\text{init}} \leftarrow \mathcal{M}_{\text{m}}^{(l)}$.
      \STATE Initialize the right-hand side (RHS) $\mathbf{b} \leftarrow \mathbf{0}$.

      \STATE {\bfseries 1. Aggregating Statistics}
      \FOR{$i = 1$ to $D$}
        \STATE $\{\mathbf{x}_{i,j}\}_{j=1}^{N_i} \leftarrow \operatorname{Forward}(\mathcal{M}_{\text{m}}[0 \dots l-1], \mathcal{D}_i)$
        \FOR{$j=1$ to $N_i$}
          \STATE Compute router output $\mathbf{r}_{i,j}$ and Hessian $\mathbf{H}_{i,j}$.
        \ENDFOR
        \STATE Def operator $\mathcal{A}_i(\mathbf{w}) \triangleq \sum_{i=1}^D \boldsymbol{\Sigma}_{\text{reg}, i} \mathbf{w}$ via Eq.~\ref{eq:prop_identity}.
        \STATE Accumulate $\mathbf{b} \leftarrow \mathbf{b} + \mathcal{A}_i(\mathbf{w}_i)$.
      \ENDFOR

      \STATE {\bfseries 2. Solving System with Conjugate Gradient (CG)}
      \STATE Def operator $\mathcal{A}(\mathbf{w}) \triangleq \sum_{i=1}^D \mathcal{A}_i(\mathbf{w})$.
      \STATE $\mathbf{w}^*_{\text{m}} \leftarrow \operatorname{CGSolver}(\mathcal{A}, \mathbf{b}, \operatorname{vec}(\mathbf{W}_{\text{init}}^\top))$.
      \STATE \textcolor{gray}{// See Algorithm \ref{alg:cg_solver} in Appendix \ref{sec: CG solver} for details.}

      \STATE {\bfseries 3. Updating Model}
      \STATE Update router weights in $\mathcal{M}_{\text{m}}^{(l)}$ with $\mathbf{w}^*_{\text{m}}$.
    \ENDFOR
    \STATE \textbf{Return} $\mathcal{M}_{\text{m}}$
  \end{algorithmic}
\end{algorithm}
}

\subsection{Algorithm}
\label{sec:algorithm}

We present the router calibration procedure in Algorithm \ref{alg:moe_merge}. Our method initializes from an uncalibrated merged model $\mathcal{M}_{\text{base}}$ and is agnostic to the specific model merging strategy (e.g., Task Arithmetic \cite{TaskArithmetic}, TIES-Merging \cite{Ties}) employed. It progressively refines the routing parameters $\mathbf{W}_{\text{m}}^{(l)}$ for each MoE layer $l$.
Crucially, the input features $\mathbf{x}$ used to calibrate layer $l$ are generated on the fly by forwarding the merged model up to layer $l-1$. This design ensures that each router is optimized with respect to the distribution shifts induced by parameter merging in preceding layers. The resulting optimization problem is solved using a matrix-free conjugate gradient method, exploiting the Kronecker-product structure in Section~\ref{sec:efficient-computation} to avoid explicit Hessian materialization.
As detailed in Appendix~\ref{sec:appendix_overhead}, replacing the explicit analytical calculation with our matrix-free formulation reduces the overall optimization time complexity across all $L$ layers from a prohibitive $\mathcal{O}(L(N_{\text{total}} K^2 d^2 + K^3 d^3))$ to $\mathcal{O}(L \cdot T \cdot N_{\text{total}} \cdot Kd)$, where $N_{\text{total}}$ is the total number of calibration tokens and $T$ is the number of CG iterations. Correspondingly, the peak space complexity drops from $\mathcal{O}(K^2 d^2)$ to $\mathcal{O}(N_{\text{total}} d + Kd)$.

\vspace{-1mm}
\subsection{Discussion}
\label{sec:discussion}

In this section, we analyze the relationship between HARC and existing data-driven merging paradigms. We highlight how our Hessian-aware formulation generalizes previous linear regression-based methods by capturing the non-linear routing dynamics identified in Lemma 1.

\textbf{Connection to Fisher Merging (Capturing Competitive Dynamics).}
Fisher Merging~\cite{Fisher} weighs parameters based on the diagonal of the Fisher Information Matrix (FIM). Theoretically, our precision matrix $\boldsymbol{\Sigma}_i$ corresponds to the exact empirical FIM. However, standard Fisher Merging assumes a diagonal FIM to reduce memory costs. As discussed in our theoretical analysis, the MoE softmax constraint ($\sum r^{(k)}=1$) introduces strong negative correlations between experts (the \textbf{off-diagonal competitive dynamics}). By assuming independence, diagonal Fisher Merging fails to model these trade-offs. HARC preserves these dense correlations via Kronecker factorization, effectively extending Fisher Merging to the non-linear routing context without prohibitive memory costs.

\textbf{Connection to RegMean (Enabling Importance Weighting).}
RegMean~\cite{RegMean} minimizes the $L_2$ distance between activations, which mathematically implies an identity Hessian assumption ($\mathbf{H}_i \approx \mathbf{I}$). This effectively treats all experts as equally important, regardless of their activation probability. In contrast, HARC incorporates the true routing curvature $\mathbf{H}_i$, which acts as an \textbf{importance weighting} mechanism (Lemma 1). This ensures that the merged router prioritizes alignment for \emph{active experts} (high curvature regions) while tolerating errors in inactive ones. This distinction explains why HARC significantly outperforms RegMean in addressing the \emph{expert mismatch} problem, as validated in our experiments.

\section{Experiments}
\label{sec:experiments}
\begin{table*}[t]
  \centering
  \caption{
    Multi-task performance when merging OLMoE models on mathematics reasoning and code generation tasks.
  }
    \begin{adjustbox}{width=0.8\textwidth}
    \begin{tabular}{lccccccc}
      \toprule
      \multicolumn{1}{c}{\multirow{2}[2]{*}{Method}} & \multicolumn{3}{c}{Math} & \multicolumn{3}{c}{Code} & \multirow{2}[2]{*}{Overall} \\
      \cmidrule(lr){2-4} \cmidrule(lr){5-7}
      \multicolumn{1}{c}{} & GSM8K & MATH500 & Average & HumanEval+ & MBPP+ & Average &  \\
      \midrule
      Individual & 69.20 & 17.53 & 43.37 & 33.50 & 39.95 & 36.73 & 40.05 \\
      \midrule
      Weight Averaging & 65.53 & 16.86 & 41.20 & 20.27 & 35.07 & 27.67 & 34.43 \\
      \rowcolor{mylightblue} \quad w/ HARC & 65.54 & 16.64 & 41.09 & 20.73 & 36.82 & 28.78 & 34.93 \\
      TIES-Merging & 66.40  & 15.75 & 41.08 & 26.45 & 35.93 & 31.19 & 36.13 \\
      \rowcolor{mylightblue} \quad w/ HARC & 67.07 & 16.48 & 41.78 & 27.29 & 35.83 & 31.56 & 36.67 \\
      DARE & 65.34 & 14.98 & 40.16 & 29.57 & 36.77 & 33.17 & 36.67 \\
      \rowcolor{mylightblue} \quad w/ HARC & 65.58 & 15.05 & 40.32 & 31.36 & 36.82 & 34.09 & 37.20 \\
      WUDI-Merging & 69.45 & 17.43 & 43.44 & 28.73 & 36.72 & 32.73 & 38.08 \\
      \rowcolor{mylightblue} \quad w/ HARC & 70.06 & 17.96 & 44.01 & 29.61 & 38.19 & 33.90 & 38.96 \\
      Fisher Merging & 67.23 & 18.35 & 42.79 & 19.70 & 34.85 & 27.28 & 35.03 \\
      \rowcolor{mylightblue} \quad w/ HARC & 66.87 & 17.51 & 42.19 & 20.39 & 36.03 & 28.21 & 35.20 \\
      RegMean & 70.48 & 16.80 & 43.64 & 24.47 & 36.81 & 30.64 & 37.14 \\
      \rowcolor{mylightblue} \quad w/ HARC & 70.76 & 17.41 & 44.09 & 25.11 & 37.04 & 31.08 & 37.58 \\
      \bottomrule
    \end{tabular}
    \end{adjustbox}
  \label{tab:results-2-olmoe}
  \vspace{-3mm}
\end{table*}
\subsection{Experimental Setup}
\label{sec:experimental-setup}

\textbf{Models and Dataset.} We validate our method on the MoE-based model OLMoE-1B-7B-0125~\cite{OLMoE}. Domain-specific models are fine-tuned on subsets of mathematical reasoning and code generation from the OLMoE-SFT dataset. For methods requiring calibration data, we sample prompts from OpenMathInstruct2~\cite{openmathinstruct2} and SelfOSSInstructSC2\footnote{\url{https://huggingface.co/datasets/bigcode/self-oss-instruct-sc2-instructions}}. Responses are generated by the source model, retaining only those matching ground truth (math) or passing unit tests (code). Additional experiments on merging three models and on the larger Qwen3-30B-A3B~\cite{qwen3} are presented in Appendices~\ref{sec:appendix_multimodel} and~\ref{sec:appendix_qwen}, respectively.

\textbf{Baselines.} We evaluate HARC against representative data-free merging methods, including Weight Averaging~\cite{ModelSoups}, TIES-Merging~\cite{Ties}, DARE-TIES~\cite{Dare}, and WUDI-Merging~\cite{WUDI}, as well as data-driven approaches such as Fisher Merging~\cite{Fisher} and RegMean~\cite{RegMean}. To establish an empirical upper bound, we also report \textit{Individual} results, corresponding to the performance of each source model on its respective domain.

\textbf{Evaluation.}
We evaluate the performance of the above models on math benchmarks (GSM8K~\cite{gsm8k} and MATH500~\cite{math500}) and code benchmarks (HumanEval+ and MBPP+~\cite{evalplus}). To mitigate evaluation variance, we sample 16 outputs per problem at temperature $0.1$, and report the average accuracy.

\subsection{Main Results}
\label{sec:main-results}
Table~\ref{tab:results-2-olmoe} reports the results of merging two OLMoE models fine-tuned for mathematics and code generation, respectively. Overall, augmenting existing merging pipelines with our Hessian-Aware Router Calibration (HARC) consistently improves multi-task performance.

\textbf{Consistent Improvement Across Merging Strategies.}
HARC yields improvements when applied to a diverse set of baselines, ranging from simple Weight Averaging to the stronger WUDI framework. For example, under WUDI, HARC improves the overall score from $38.08$ to \textbf{38.96} (+0.88), establishing the best performance among all merged variants in Table~\ref{tab:results-2-olmoe}. This pattern indicates that routing calibration is complementary to how expert parameters are merged, and it can be plugged into different merging strategies as a lightweight post-processing step.

\textbf{Mitigating Routing Collapse in Sensitive Domains.}
A second observation is that naive merging can severely damage code generation performance, which is typically more sensitive to expert selection. For instance, Average-Merging substantially degrades the code score relative to the individual model (e.g., HumanEval+ drops from $33.50$ to $20.27$). HARC partially recovers this loss by correcting routing behavior. In the WUDI setting, HARC increases MBPP+ from $36.72$ to $38.19$ and improves the code average from $32.73$ to $33.90$. These results are consistent with our analysis in Section~\ref{sec:preliminaries}: because the router involves a non-linear softmax and discrete Top-$k$ selection, small logit perturbations can flip expert assignments. Aligning the router distribution, rather than only averaging parameters, is therefore crucial for preserving domain-specific capabilities.

\subsection{Ablation Studies}
\label{sec:ablation}

\begin{figure*}[htbp]
    \centering
    \includegraphics[width=0.92\linewidth]{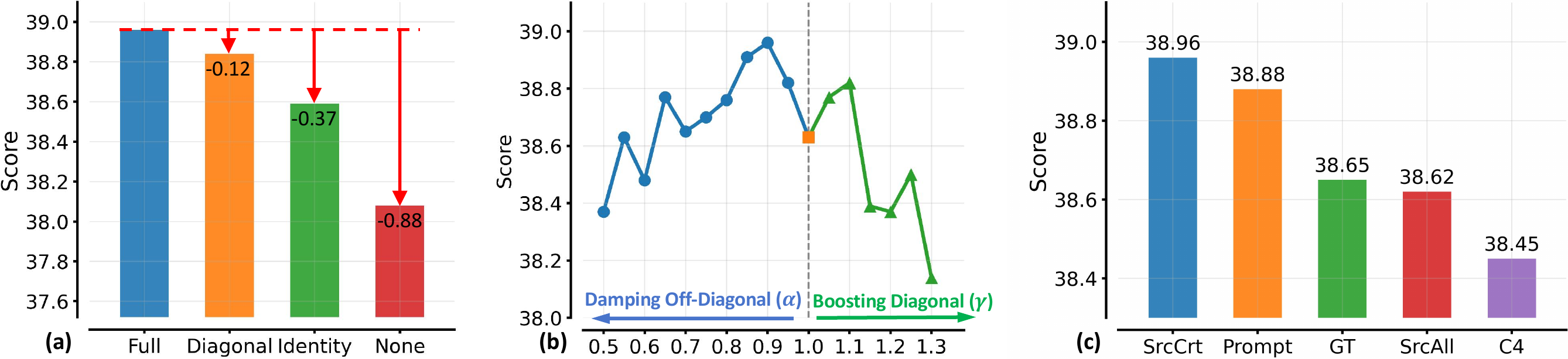}
    \caption{
        \textbf{(a)} Effectiveness of Hessian information: Comparison of various Hessian structures in WUDI-Merging. \emph{Full} uses the complete Hessian matrix; \emph{Diagonal} retains only the diagonal elements; \emph{Identity} assumes an identity Hessian; and \emph{None} represents the uncalibrated baseline.
        \textbf{(b)} Effectiveness of diagonal regularization: Comparison of the impact of reducing off-diagonal elements via the damping factor $\alpha$ (left) versus increasing diagonal elements via the scaling factor $\gamma$ (right).
        \textbf{(c)} Impact of data composition: Comparison of different types of calibration data. \emph{SrcCrt} uses filtered correct source responses; \emph{Prompt} uses only the prompts; \emph{GT} uses ground truth references; \emph{SrcAll} includes unfiltered source responses with potential errors; and \emph{C4} uses general-purpose pre-training data (C4-en).
    }
    \label{fig:ablation}
    \vspace{-4mm}
\end{figure*}

\textbf{Effectiveness of Hessian Information.}
To quantify the contribution of second-order curvature, we compare our full Hessian formulation with three simplified variants: (i) a diagonal Hessian approximation, (ii) an identity approximation ($\mathbf{H}_i=\mathbf{I}$), and (iii) the uncalibrated baseline (i.e., without router calibration). As shown in Figure~\ref{fig:ablation}(a), performance degrades monotonically as the curvature structure is simplified, indicating that richer curvature information leads to more accurate routing alignment.
Notably, the gap between the full and diagonal settings (with a drop of $0.12$) suggests that off-diagonal terms are not negligible. This is expected because the softmax normalization couples expert logits, inducing correlated perturbations across routing dimensions. Moreover, Hessian-based variants consistently outperform the identity approximation, showing that aligning router behavior cannot be reduced to input-statistics matching alone. Incorporating curvature allows the calibration to emphasize high-sensitivity regions of the routing function, which yields substantial gains of up to $0.88$ points over the uncalibrated baseline.

\textbf{Effectiveness of Diagonal Regularization.}
We study the impact of regularization by varying the damping factor $\alpha$ (off-diagonal suppression) and the diagonal scaling factor $\gamma$ (diagonal amplification), as shown in Figure~\ref{fig:ablation}(b). Although both approaches increase diagonal dominance in principle, they lead to different empirical stability profiles. Performance deteriorates when $\alpha$ is too small or $\gamma$ is too large, indicating that overly aggressive regularization distorts routing decisions.
Across a broad range of values, the $\alpha$-based strategy remains stable and consistently improves over the baseline. In contrast, the $\gamma$-based strategy degrades sharply once $1+\gamma$ exceeds $1.15$. We attribute this to numerical effects during implementation. Our solver accumulates precision statistics over many batches, and explicitly inflating diagonal magnitudes amplifies floating-point errors during aggregation. Suppressing off-diagonal entries via $\alpha$ avoids scale inflation, which better preserves numerical precision and leads to more reliable convergence. Notably, the fixed $\alpha = 0.9$ serves as a robust default: it is used across all six merging baselines in Table~\ref{tab:results-2-olmoe} without any per-method tuning.

\subsection{Further Analysis}
\label{sec:analysis}

\textbf{Impact of Data Composition.}
We investigate how data composition affects alignment by comparing filtered correct source generations (\texttt{SrcCrt}), unfiltered generations (\texttt{SrcAll}), ground truth targets (\texttt{GT}), prompts only (\texttt{Prompt}), and general-purpose pre-training data (\texttt{C4}). As shown in Figure~\ref{fig:ablation}(c), \texttt{SrcCrt} achieves the highest performance, confirming that the optimal calibration target is the model's native ``on-policy'' distribution corresponding to correct reasoning. Surprisingly, the \texttt{Prompt} setting yields comparable results, outperforming strategies that utilize full response text like \texttt{GT} and \texttt{SrcAll}. This implies that the initial routing decisions triggered by input instructions are paramount; correctly dispatching the prompt to domain-relevant experts establishes a robust routing trajectory that persists during generation. In contrast, \texttt{GT} suffers from distribution mismatch—forcing alignment with text the model did not generate pushes the router into incompatible states—while \texttt{SrcAll} degrades performance by incorporating erroneous reasoning traces into the Hessian estimation. Notably, we also evaluate with C4-en~\cite{c4}, a general-purpose pre-training corpus unrelated to the downstream tasks. HARC still retains $42\%$ of the gain ($+0.37$ vs.\ $+0.88$), confirming that while high-quality data is beneficial, HARC provides consistent gains across a wide spectrum of data availability and quality.

\begin{figure}[htbp]
    \centering
    \includegraphics[width=\linewidth]{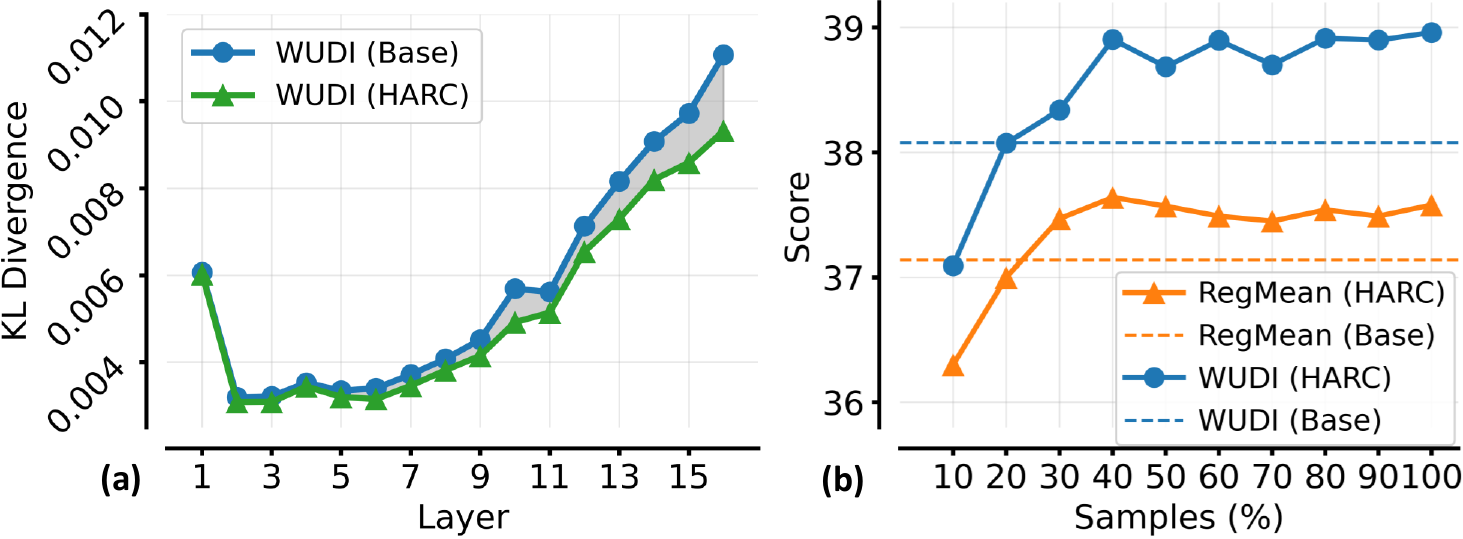}
    \caption{Routing consistency and data efficiency analysis. (a) Layer-wise KL divergence between merged and source routers. (b) Performance trajectory with varying amounts of calibration data.}
    \label{fig:analysis}
    \vspace{-3mm}
\end{figure}

\textbf{Mitigating Cumulative Routing Deviation.}
To assess whether HARC effectively mitigates the routing breakdown phenomenon, we visualize the layer-wise Kullback-Leibler (KL) divergence between the merged router's output distribution and the source routers' distributions in Figure \ref{fig:analysis}(a). We observe that, for the baseline WUDI method, the divergence remains low in the shallower layers but experiences a sharp exponential increase starting from Layer 11. This supports our hypothesis that routing errors accumulate as network depth increases, likely due to the increasing specialization of experts in the deeper layers. In contrast, applying HARC suppresses this divergence trend. By incorporating Hessian curvature information, HARC maintains a much closer alignment with the original routing behavior, particularly in the critical deeper layers (layers 12-16), thereby effectively preventing the catastrophic semantic drift caused by expert mismatch during model merging.

\textbf{Data Efficiency.}
Given that HARC relies on calibration data to estimate Hessian and precision matrices, we explore the sensitivity of our method to the size of the calibration set. Figure \ref{fig:analysis}(b) shows the performance of RegMean and WUDI-Merging augmented with HARC as the percentage of calibration samples increases from 10\% to 100\%. The results demonstrate that HARC is highly data-efficient. Both methods exceed their respective uncalibrated baselines (indicated by dashed lines) using only 20\% of the available calibration data. Furthermore, performance stabilizes rapidly, reaching a plateau around 40\% of the samples. This indicates that the estimated second-order statistics converge quickly to a sufficient precision for effective alignment, making HARC a practical solution even in scenarios with limited data or strict computational constraints.

\vspace{-2mm}
\section{Conclusion}
\label{sec:conclusion}

In this work, we identify \emph{routing breakdown} as a critical failure mode in MoE merging, stemming from the high sensitivity of non-linear gating mechanisms to parameter perturbations. To address this, we propose Hessian-Aware Router Calibration (HARC), a training-free framework that leverages second-order curvature information to analytically realign the merged router via a scalable, matrix-free conjugate gradient solver. Experiments across math and code generation tasks demonstrate that HARC effectively rectifies expert mismatch and consistently outperforms existing merging baselines. Our findings highlight that preserving routing logic is as essential as aligning parameter weights in sparse architectures, offering a robust and efficient path for consolidating MoE models without the need for retraining.

\section*{Acknowledgements}
This work was supported by the National Natural Science Foundation of China (No. 62576368).

\section*{Use of Generative AI Tools}
We used large language models as a writing assistant to refine phrasing and improve the readability of the manuscript. All technical content, claims, experimental results, and conclusions were produced, verified, and validated by the authors, who take full responsibility for the final manuscript.

\section*{Impact Statement}

This work advances model merging techniques for Mixture-of-Experts (MoE) language models by identifying and addressing routing breakdown, a key failure mode that can undermine the safe and effective reuse of pretrained models. By enabling reliable consolidation of multiple expert models without retraining, the proposed Hessian-Aware Router Calibration (HARC) method can reduce computational costs, energy consumption, and barriers to deploying capable models, with positive environmental and accessibility implications. At the same time, improved model merging may amplify the capabilities of existing systems, underscoring the importance of responsible deployment and evaluation, particularly in high-stakes applications. As this work focuses on training-free calibration rather than model design or data collection, ethical risks primarily relate to downstream use; these should be managed through appropriate oversight, benchmarking, and domain-specific safeguards.


\bibliography{main}
\bibliographystyle{icml2026}

\newpage
\appendix
\onecolumn

\section{Proofs and Assumption Validation}
\label{sec:proofs}

In this section, we provide detailed proofs for the theoretical results presented in Section~\ref{sec:methodology} and validate the key assumption underlying Lemma~\ref{lem:quadratic}.

\subsection{Proof and Assumption Validation of Lemma~\ref{lem:quadratic}}
\label{sec:proof lemma1}

\begin{proof}
Let the routing probability be defined as $\mathbf{r}(\mathbf{z}) = \operatorname{softmax}(\mathbf{z})$. The KL divergence is given by:
\begin{equation}
    \operatorname{KL}(\mathbf{r}_i \| \mathbf{r}_{\text{m}}) = \sum_{k=1}^K r_i^{(k)} \log \frac{r_i^{(k)}}{r_{\text{m}}^{(k)}}.
\end{equation}
Let $f(\mathbf{z}) = \log \sum_{j} e^{z^{(j)}}$ be the log-partition function. We can rewrite the divergence in terms of the logits $\mathbf{z}$:
\begin{equation}
    \operatorname{KL}(\mathbf{r}_i \| \mathbf{r}_{\text{m}}) = \mathbf{r}_i^\top (\mathbf{z}_i - \mathbf{z}_{\text{m}}) - f(\mathbf{z}_i) + f(\mathbf{z}_{\text{m}}).
\end{equation}
We apply a second-order Taylor expansion to $f(\mathbf{z}_{\text{m}})$ around the point $\mathbf{z}_i$:
\begin{equation}
    f(\mathbf{z}_{\text{m}}) \approx f(\mathbf{z}_i) + \nabla f(\mathbf{z}_i)^\top (\mathbf{z}_{\text{m}} - \mathbf{z}_i) + \frac{1}{2} (\mathbf{z}_{\text{m}} - \mathbf{z}_i)^\top \nabla^2 f(\mathbf{z}_i) (\mathbf{z}_{\text{m}} - \mathbf{z}_i).
\end{equation}
We know that the gradient of the log-partition function is the probability vector itself, $\nabla f(\mathbf{z}_i) = \mathbf{r}_i$, and its Hessian is $\nabla^2 f(\mathbf{z}_i) = \operatorname{diag}(\mathbf{r}_i) - \mathbf{r}_i \mathbf{r}_i^\top = \mathbf{H}_i$. Substituting these into the expansion:
\begin{equation}
\begin{aligned}
    \operatorname{KL}(\mathbf{r}_i \| \mathbf{r}_{\text{m}}) &\approx \mathbf{r}_i^\top (\mathbf{z}_i - \mathbf{z}_{\text{m}}) - f(\mathbf{z}_i) \\
    &\quad + \left[ f(\mathbf{z}_i) + \mathbf{r}_i^\top (\mathbf{z}_{\text{m}} - \mathbf{z}_i) + \frac{1}{2} (\mathbf{z}_{\text{m}} - \mathbf{z}_i)^\top \mathbf{H}_i (\mathbf{z}_{\text{m}} - \mathbf{z}_i) \right] \\
    &= \frac{1}{2} (\mathbf{z}_{\text{m}} - \mathbf{z}_i)^\top \mathbf{H}_i (\mathbf{z}_{\text{m}} - \mathbf{z}_i).
\end{aligned}
\end{equation}
This completes the proof.
\end{proof}

\textbf{Assumption Validation.} The quadratic approximation assumes that the merged logits $\mathbf{z}_{\text{m}}$ lie within the local neighborhood of the source logits $\mathbf{z}_i$. We validate this by measuring the relative logit shift $\|\mathbf{z}_{\text{m}} - \mathbf{z}_i\| / \|\mathbf{z}_i\|$ on calibration data. Across TIES-Merging and WUDI-Merging, over 90\% of tokens exhibit a shift below 0.1, and over 99\% below 0.2. This is expected because the merging process itself constrains logit deviations: arithmetic-based methods produce deviations determined by task vector norms (typically small in fine-tuning), while optimization-based methods either explicitly minimize $\|\mathbf{z}_{\text{m}} - \mathbf{z}_i\|$ or implicitly constrain it during interference reduction.

\subsection{Proof of Theorem \ref{thm:optimal}}
\label{sec:proof theorem1}

\begin{proof}
Based on Lemma~\ref{lem:quadratic}, the total objective function is:
\begin{equation}
    \mathcal{J}(\mathbf{W}_{\text{m}}) \approx \sum_{i=1}^D \mathbb{E}_{\mathbf{x} \sim \mathcal{D}_i} \left[ \frac{1}{2} (\mathbf{z}_{\text{m}} - \mathbf{z}_i)^\top \mathbf{H}_i (\mathbf{z}_{\text{m}} - \mathbf{z}_i) \right].
\end{equation}
Substituting $\mathbf{z} = \mathbf{W}\mathbf{x}$ and vectorizing $\mathbf{w} = \operatorname{vec}(\mathbf{W}^\top)$:
\begin{equation}
    \mathbf{z}_{\text{m}} - \mathbf{z}_i = (\mathbf{W}_{\text{m}} - \mathbf{W}_i)\mathbf{x} = (\mathbf{I}_K \otimes \mathbf{x}^\top) (\mathbf{w}_{\text{m}} - \mathbf{w}_i).
\end{equation}
The quadratic term becomes:
\begin{equation}
\begin{aligned}
    (\mathbf{z}_{\text{m}} - \mathbf{z}_i)^\top \mathbf{H}_i (\mathbf{z}_{\text{m}} - \mathbf{z}_i)
    &= (\mathbf{w}_{\text{m}} - \mathbf{w}_i)^\top (\mathbf{I} \otimes \mathbf{x}) \mathbf{H}_i (\mathbf{I} \otimes \mathbf{x}^\top) (\mathbf{w}_{\text{m}} - \mathbf{w}_i) \\
    &= (\mathbf{w}_{\text{m}} - \mathbf{w}_i)^\top (\mathbf{H}_i \otimes \mathbf{x}\mathbf{x}^\top) (\mathbf{w}_{\text{m}} - \mathbf{w}_i).
\end{aligned}
\end{equation}
Taking the expectation over $\mathcal{D}_i$, we substitute $\boldsymbol{\Sigma}_i = \mathbb{E}[\mathbf{H}_i \otimes \mathbf{x}\mathbf{x}^\top]$:
\begin{equation}
    \mathcal{J}(\mathbf{w}_{\text{m}}) \approx \sum_{i=1}^D \frac{1}{2} (\mathbf{w}_{\text{m}} - \mathbf{w}_i)^\top \boldsymbol{\Sigma}_i (\mathbf{w}_{\text{m}} - \mathbf{w}_i).
\end{equation}
Taking the gradient with respect to $\mathbf{w}_{\text{m}}$ and setting it to zero:
\begin{equation}
    \nabla \mathcal{J} = \sum_{i=1}^D \boldsymbol{\Sigma}_i (\mathbf{w}_{\text{m}} - \mathbf{w}_i) = 0 \implies \left(\sum_{i=1}^D \boldsymbol{\Sigma}_i\right) \mathbf{w}_{\text{m}} = \sum_{i=1}^D \boldsymbol{\Sigma}_i \mathbf{w}_i.
\end{equation}
Assuming $\sum \boldsymbol{\Sigma}_i$ is invertible (or regularized), we obtain the closed-form solution.
\end{proof}

\subsection{Derivation of Proposition \ref{pro:matrix-free}}
\label{sec:proof proposition1}

\textit{Derivation.}
By the definition of the regularized precision matrix (Eq.~\ref{eq:regularization}), we can explicitly expand the matrix-vector product $\boldsymbol{\Sigma}_{\text{reg}, i} \mathbf{w}$ into a dense component and a diagonal regularization component:
\begin{equation}
    \boldsymbol{\Sigma}_{\text{reg}, i} \mathbf{w} = \mathbb{E}_{\mathbf{x} \sim \mathcal{D}_i} \left[ \alpha (\mathbf{H}_i \otimes \mathbf{x}\mathbf{x}^\top) \mathbf{w} + (1 - \alpha) \operatorname{diag}(\mathbf{H}_i \otimes \mathbf{x}\mathbf{x}^\top) \mathbf{w} \right].
\end{equation}
We now derive the matrix-free formulation for each term inside the expectation by substituting $\mathbf{w} = \operatorname{vec}(\mathbf{W}^\top)$.

\textbf{Dense term.}
For the dense component $\alpha \boldsymbol{\Sigma}_i \mathbf{w}$, applying the standard identity $(\mathbf{A} \otimes \mathbf{B})\operatorname{vec}(\mathbf{Y}) = \operatorname{vec}(\mathbf{B}\mathbf{Y}\mathbf{A}^\top)$:
\begin{equation}
\begin{aligned}
    (\mathbf{H}_i \otimes \mathbf{x}\mathbf{x}^\top) \operatorname{vec}(\mathbf{W}^\top)
        &= \operatorname{vec}\!\left(\mathbf{x}\mathbf{x}^\top \mathbf{W}^\top \mathbf{H}_i\right) \\
        &= \operatorname{vec}\!\left( \mathbf{x} \left( (\mathbf{W}\mathbf{x})^\top \mathbf{H}_i \right) \right).
\end{aligned}
\end{equation}
This grouping explicitly reveals the matrix-free evaluation sequence: (1) compute logits $\mathbf{z} = \mathbf{W}\mathbf{x} \in \mathbb{R}^K$, (2) compute the Hessian-weighted row vector $\mathbf{u}^\top = \mathbf{z}^\top \mathbf{H}_i \in \mathbb{R}^{1 \times K}$, and (3) form the outer product $\mathbf{x}\mathbf{u}^\top \in \mathbb{R}^{d \times K}$. Each step involves only matrix-vector or vector-matrix products, strictly avoiding any matrix-matrix multiplication. Specifically, step (1) takes $\mathcal{O}(Kd)$ time, step (2) takes $\mathcal{O}(K^2)$ time, and step (3) takes $\mathcal{O}(Kd)$ time. Given that the hidden dimension is typically much larger than the number of experts ($d \gg K$) in MoE architectures, the overall time complexity simplifies to $\mathcal{O}(Kd)$. Furthermore, the space complexity for this term is also bounded by $\mathcal{O}(Kd)$ to store the intermediate vectors, the routing Hessian, and the output matrix.

\textbf{Diagonal regularization term.}
For the diagonal component, we utilize the property $\operatorname{diag}(\mathbf{A} \otimes \mathbf{B}) = \operatorname{diag}(\mathbf{A}) \otimes \operatorname{diag}(\mathbf{B})$ and note that $\operatorname{diag}(\mathbf{x}\mathbf{x}^\top) = \operatorname{diag}(\mathbf{x} \odot \mathbf{x})$. Applying the standard vectorization identity to these diagonal factors yields:
\begin{equation}
\begin{aligned}
    \operatorname{diag}\left(\mathbf{H}_i \otimes \mathbf{x}\mathbf{x}^\top\right) \operatorname{vec}\left(\mathbf{W}^\top\right) 
    &= \left( \operatorname{diag}\left(\mathbf{H}_i\right) \otimes \operatorname{diag}\left(\mathbf{x} \odot \mathbf{x}\right) \right) \operatorname{vec}\left(\mathbf{W}^\top\right) \\
    &= \operatorname{vec}\left( \operatorname{diag}\left(\mathbf{x} \odot \mathbf{x}\right) \, \mathbf{W}^\top \, \operatorname{diag}\left(\mathbf{H}_i\right) \right).
\end{aligned}
\end{equation}
Element-wise, this corresponds to scaling each entry of $\mathbf{W}^\top$ by the respective diagonal factors: $[\cdot]_{jk} = x_j^2 \, \mathbf{W}_{kj} \, [\mathbf{H}_i]_{kk}$, which constitutes a pure element-wise scaling operation requiring only $O(Kd)$ FLOPs. The space complexity is also strictly bounded by $O(Kd)$ to allocate the scaled matrix, plus $O(d)$ and $O(K)$ for the diagonal vectors.

\textbf{Combined operator.}
Combining both terms and taking the expectation yields the exact matrix-free linear operator in Eq.~\ref{eq:prop_identity}. This formulation is mathematically rigorous under standard vectorization conventions and completely avoids materializing the full $Kd \times Kd$ precision matrix.
Overall, our matrix-free formulation evaluates the combined operator in only $\mathcal{O}(Kd)$ time and space per instance, achieving an order-of-magnitude reduction in computational and memory overhead compared to the prohibitive $\mathcal{O}(K^2 d^2)$ complexity of explicit materialization. \hfill $\square$

\section{Implementation Details}
\label{sec:appendix_impl}

\subsection{Matrix-Free Conjugate Gradient Solver}
\label{sec: CG solver}
We provide the detailed procedure of the Matrix-Free Conjugate Gradient solver used in HARC. This solver optimizes the quadratic objective without explicitly materializing the Precision matrix, relying solely on the matrix-vector product operator $\mathcal{A}(\cdot)$.

\begin{algorithm}[htbp]
  \caption{Matrix-Free Conjugate Gradient Solver}
  \label{alg:cg_solver}
  \begin{algorithmic}[1]
    \STATE {\bfseries Input:} Linear operator $\mathcal{A}(\cdot)$, RHS vector $\mathbf{b}$, initial guess $\mathbf{w}_{\text{init}}$, tolerance $\epsilon$
    \STATE {\bfseries Output:} Optimized weights $\mathbf{w}^*$

    \STATE \textbf{Initialize:}
    \STATE Set initial guess $\mathbf{w} \leftarrow \mathbf{w}_{\text{init}}$
    \STATE Compute initial residual $\mathbf{r} \leftarrow \mathbf{b} - \mathcal{A}(\mathbf{w})$
    \STATE Set initial search direction $\mathbf{p} \leftarrow \mathbf{r}$
    \STATE Compute initial squared residual norm $\rho_{\text{old}} \leftarrow \mathbf{r}^\top \mathbf{r}$
    \STATE Compute target norm $b_{\text{norm}} \leftarrow \|\mathbf{b}\|_2$

    \WHILE{$\|\mathbf{r}\|_2 / b_{\text{norm}} > \epsilon$}
      \STATE Compute matrix-free product $\mathbf{q} \leftarrow \mathcal{A}(\mathbf{p})$ via Eq.~\ref{eq:prop_identity}
      \STATE Calculate curvature $\eta \leftarrow \mathbf{p}^\top \mathbf{q}$
      \STATE Calculate optimal step size $\alpha_{\text{step}} \leftarrow \rho_{\text{old}} / \eta$
      \STATE Update solution weights $\mathbf{w} \leftarrow \mathbf{w} + \alpha_{\text{step}} \mathbf{p}$
      \STATE Update residual $\mathbf{r} \leftarrow \mathbf{r} - \alpha_{\text{step}} \mathbf{q}$
      \STATE Compute new squared residual norm $\rho_{\text{new}} \leftarrow \mathbf{r}^\top \mathbf{r}$
      \STATE Compute Gram-Schmidt coefficient $\beta \leftarrow \rho_{\text{new}} / \rho_{\text{old}}$
      \STATE Update search direction $\mathbf{p} \leftarrow \mathbf{r} + \beta \mathbf{p}$
      \STATE Update squared norm $\rho_{\text{old}} \leftarrow \rho_{\text{new}}$
    \ENDWHILE

    \STATE \textbf{Return} $\mathbf{w}$
  \end{algorithmic}
\end{algorithm}

\textbf{Convergence Stability.} A potential concern is whether the CG solver converges when the routing Hessian $\mathbf{H}_i$ is highly sparse or specialized (e.g., with near-zero eigenvalues for inactive experts). Our diagonal regularization (Eq.~\ref{eq:regularization}) directly addresses this by interpolating the full precision matrix with its diagonal approximation, which lifts near-zero eigenvalues and reduces the condition number of the aggregated system. In practice, we observe stable convergence across all experimental settings: with a fixed tolerance of $10^{-6}$, the average iteration count is $256.1$ for two-model merging and actually \emph{decreases} to $231.9$ for three-model merging. This acceleration arises because the precision matrices $\mathbf{H}_i$ from different domains have complementary null spaces: the positive eigenvalues of $\mathbf{H}_3$ fill the near-zero eigenvalues of $\mathbf{H}_1$ and $\mathbf{H}_2$, raising the minimum eigenvalue of the aggregate Hessian $\mathbf{H} = \sum_i \mathbf{H}_i$ and further reducing its condition number.

\subsection{Hyperparameters}
\label{sec:hyperparameters}
For the HARC framework, we set the regularization damping factor to $\alpha = 0.9$. We solve the optimization using a matrix-free conjugate gradient algorithm with a convergence tolerance of $1 \times 10^{-6}$. For fair comparison, we tune all baseline hyperparameters (e.g., sparsification rates and scaling factors) via grid search on a validation set and report their best-performing results. Hyperparameters adopted for baseline methods can be found in Table \ref{fig:baseline_hyperparameters}.

\begin{table*}[h]
  \centering
  \caption{
    Hyperparameters for baseline methods.
  }
  \label{fig:baseline_hyperparameters}
    \begin{adjustbox}{width=0.6\textwidth}
    \begin{tabular}{ll}
      \toprule
      Method & Hyperparameters \\
      \midrule
      TIES-Merging & Density: $0.6$, model weights: $(0.4, 0.6)$\\
      DARE & Density: $0.4$, model weights: $(0.3, 0.7)$ \\
      WUDI-Merging & Optimizer: Adam, learning rate: $10^{-5}$, iteration steps: $300$  \\
      Fisher Merging & Model weights: $(0.5, 0.5)$ \\
      RegMean & Regularization factor: $0.9$\\
      \bottomrule
    \end{tabular}
    \end{adjustbox}
\end{table*}

\section{Complexity and Empirical Overhead Analysis}
\label{sec:appendix_overhead}

In this section, we provide a theoretical complexity analysis of the complete HARC algorithm, followed by an empirical evaluation of its computational overhead and a discussion on the time-performance trade-off.

\subsection{Theoretical Complexity}
To highlight the efficiency of our proposed solver, we compare the theoretical complexity of directly computing the closed-form solution versus our matrix-free conjugate gradient approach. Let $N_{\text{total}} = \sum_{i=1}^D N_i$ be the total number of calibration tokens, $L$ be the number of MoE layers, and $T$ be the average number of CG iterations.

\textbf{Direct Analytical Solution.} 
Directly computing $\mathbf{w}_{\text{m}}^* = (\sum \boldsymbol{\Sigma}_i)^{-1} (\sum \boldsymbol{\Sigma}_i \mathbf{w}_i)$ requires explicitly constructing and inverting the aggregated precision matrix $\boldsymbol{\Sigma}$, which has dimensions $Kd \times Kd$. 
\begin{itemize}
    \item \textit{Space Complexity:} Materializing this full matrix demands $\mathcal{O}(K^2 d^2)$ memory per layer. For a modern MoE model (e.g., $d=4096, K=64$), storing just a single precision matrix in FP32 format would require over 250 GB of memory, instantly exceeding standard single-GPU capacities.
    \item \textit{Time Complexity:} Constructing the matrix via full Kronecker products across all tokens takes $\mathcal{O}(N_{\text{total}} K^2 d^2)$ time. Furthermore, matrix inversion scales cubically, demanding $\mathcal{O}(K^3 d^3)$ time. The total time for calibrating $L$ layers is a prohibitive $\mathcal{O}(L \cdot (N_{\text{total}} K^2 d^2 + K^3 d^3))$.
\end{itemize}

\textbf{Matrix-Free CG Formulation.} 
Our approach entirely bypasses the explicit construction of $\boldsymbol{\Sigma}$ by leveraging the operator factorization derived in Appendix~\ref{sec:proof proposition1}, assuming $d \gg K$.
\begin{itemize}
    \item \textit{Space Complexity:} Calibrating sequentially layer-by-layer, the peak memory footprint is strictly dominated by caching the input features $\mathbf{x}$ and routing Hessians $\mathbf{H}$ for the current layer, requiring $\mathcal{O}(N_{\text{total}} \cdot d + N_{\text{total}} \cdot K^2)$ memory. Maintaining the CG state vectors (e.g., $\mathbf{w, r, p, q}$) takes $\mathcal{O}(Kd)$. The overall space complexity reduces to $\mathcal{O}(N_{\text{total}} \cdot d + Kd)$.
    \item \textit{Time Complexity:} The CG solver computes matrix-vector products on the fly. Evaluating the operator for all tokens in a single iteration takes $\mathcal{O}(N_{\text{total}} \cdot Kd)$ time. Thus, the optimization time for all layers is $\mathcal{O}(L \cdot T \cdot N_{\text{total}} \cdot Kd)$. Note that generating the calibration features via a forward pass takes an additional $\mathcal{O}(N_{\text{total}} \cdot \text{ForwardCost})$.
\end{itemize}

\textbf{Summary.} By replacing the cubic $\mathcal{O}(K^3 d^3)$ inversion and quadratic $\mathcal{O}(K^2 d^2)$ storage with linear operations bounded by $\mathcal{O}(Kd)$, HARC achieves an immense reduction in both computational and memory overhead. This structural optimization ensures that the calibration framework scales seamlessly to massive architectures.

\subsection{Empirical Overhead}
We evaluate the actual computational cost of HARC compared to representative merging methods.

\begin{table}[h]
  \centering
  \caption{Runtime and peak memory comparison across representative merging methods.}
  \label{tab:overhead}
    \begin{adjustbox}{width=0.65\columnwidth}
    \begin{tabular}{lcc}
      \toprule
      Method & Runtime & Peak Memory \\
      \midrule
      Data-free (Weight Averaging, TIES-Merging, DARE) & $<$5 min & $\sim$80 GB \\
      RegMean & 12.4 min & 120.9 GB \\
      HARC & 36.2 min & 118.7 GB \\
      \bottomrule
    \end{tabular}
    \end{adjustbox}
\end{table}

As shown in Table~\ref{tab:overhead}, HARC requires 36.2 minutes for full calibration across all 16 MoE layers. While this is longer than data-free methods and RegMean, HARC's peak memory footprint (118.7 GB) is actually lower than that of RegMean (120.9 GB). This empirical result aligns with our theoretical analysis: by leveraging the matrix-free formulation, HARC completely bypasses the massive memory spikes that arise when computing the closed-form solution directly.

\subsection{Time-Performance Trade-off}
To explore the trade-off between calibration time and task performance, we evaluate a partial-layer calibration strategy, applying HARC only to the last $L$ MoE layers.

\begin{table}[h]
  \centering
  \caption{Partial-layer calibration: performance vs.\ runtime when applying HARC to the last $L$ layers (WUDI baseline).}
  \label{tab:partial_layers}
    \begin{adjustbox}{width=0.4\columnwidth}
    \begin{tabular}{lccc}
      \toprule
      Layers & Runtime & Performance & \% of Full Gain \\
      \midrule
      0 (baseline) & --- & 38.08 & 0\% \\
      Last 4 & $\sim$9 min & 38.40 & 36\% \\
      Last 8 & $\sim$18 min & 38.54 & 52\% \\
      All 16 & $\sim$36 min & 38.96 & 100\% \\
      \bottomrule
    \end{tabular}
    \end{adjustbox}
\end{table}

Table~\ref{tab:partial_layers} reveals a flexible trade-off. Calibrating only the last 4 layers captures 36\% of the full gain in just $\sim$9 minutes, while the last 8 layers achieve 52\% in $\sim$18 minutes. This is consistent with our observation in Section~\ref{sec:analysis} that routing deviation accumulates primarily in deeper layers. Users can thus adjust the number of calibrated layers to allocate their compute budget proportionally to their performance requirements.

\section{Extended Experimental Results}
\label{sec:appendix_extended}

\subsection{Multi-Model Merging Scalability}
\label{sec:appendix_multimodel}

To evaluate whether HARC scales beyond two-model merging, we extend our setup by adding a third OLMoE model fine-tuned for chat capabilities with training data derived from the OLMoE-SFT dataset. For the routing calibration of this chat task, we use prompts sampled from UltraFeedback~\cite{ultrafeedback}. We then merge all three models and evaluate on chat benchmarks (IFEval~\cite{IFEval} and CommonsenseQA~\cite{CommonsenseQA}), math benchmarks (GSM8K~\cite{gsm8k} and MATH500~\cite{math500}), and code benchmarks (HumanEval+ and MBPP+~\cite{evalplus}).

\begin{table*}[h]
  \centering
  \caption{
    Multi-task performance when merging three OLMoE models (chat, math, and code).
  }
    \begin{adjustbox}{width=\textwidth}
    \begin{tabular}{lcccccccccc}
      \toprule
      \multicolumn{1}{c}{\multirow{2}[2]{*}{Method}} & \multicolumn{3}{c}{Chat} & \multicolumn{3}{c}{Math} & \multicolumn{3}{c}{Code} & \multirow{2}[2]{*}{Overall} \\
      \cmidrule(lr){2-4} \cmidrule(lr){5-7} \cmidrule(lr){8-10}
      \multicolumn{1}{c}{} & IFEval & CommQA & Average & GSM8K & MATH500 & Average & HumanEval+ & MBPP+ & Average & \\
      \midrule
      Individual & 63.05 & 58.98 & 61.02 & 69.20 & 17.53 & 43.37 & 33.50 & 39.95 & 36.73 & 47.04 \\
      \midrule
      Weight Averaging & 45.74 & 59.80 & 52.77 & 66.49 & 16.20 & 41.34 & 18.14 & 34.67 & 26.41 & 40.17 \\
      \rowcolor{mylightblue} \quad w/ HARC & 46.21 & 59.50 & 52.85 & 67.34 & 16.79 & 42.06 & 18.96 & 34.83 & 26.90 & 40.60 \\
      TIES-Merging & 48.90 & 61.54 & 55.22 & 66.56 & 15.91 & 41.23 & 19.70 & 33.76 & 26.73 & 41.06 \\
      \rowcolor{mylightblue} \quad w/ HARC & 49.27 & 61.29 & 55.28 & 66.02 & 15.83 & 40.92 & 20.13 & 35.47 & 27.80 & 41.33 \\
      DARE & 48.71 & 52.38 & 50.54 & 64.66 & 16.31 & 40.49 & 29.04 & 36.33 & 32.68 & 41.24 \\
      \rowcolor{mylightblue} \quad w/ HARC & 49.03 & 53.24 & 51.13 & 65.37 & 16.81 & 41.09 & 28.81 & 36.23 & 32.52 & 41.58 \\
      WUDI-Merging & 55.98 & 58.55 & 57.26 & 63.49 & 16.11 & 39.80 & 25.08 & 37.83 & 31.45 & 42.84 \\
      \rowcolor{mylightblue} \quad w/ HARC & 56.38 & 58.91 & 57.64 & 64.56 & 17.50 & 41.03 & 25.39 & 37.89 & 31.64 & 43.44 \\
      Fisher Merging & 44.40 & 59.70 & 52.05 & 68.18 & 16.19 & 42.18 & 16.12 & 33.95 & 25.03 & 39.76 \\
      \rowcolor{mylightblue} \quad w/ HARC & 44.22 & 59.28 & 51.75 & 68.76 & 16.59 & 42.68 & 16.92 & 34.01 & 25.47 & 39.96 \\
      RegMean & 52.36 & 62.24 & 57.30 & 67.69 & 16.88 & 42.28 & 21.34 & 34.90 & 28.12 & 42.57 \\
      \rowcolor{mylightblue} \quad w/ HARC & 51.57 & 62.37 & 56.97 & 67.57 & 16.95 & 42.26 & 21.85 & 35.68 & 28.76 & 42.67 \\
      \bottomrule
    \end{tabular}
    \end{adjustbox}
  \label{tab:results-multimodel}
\end{table*}

As shown in Table~\ref{tab:results-multimodel}, HARC consistently improves the overall score across all six merging baselines. Notably, WUDI+HARC achieves the best overall score of $43.44$, a $+0.60$ improvement over the uncalibrated baseline. These results demonstrate that HARC remains effective even as the routing complexity increases with more models.

\subsection{Cross-Architecture Generalization}
\label{sec:appendix_qwen}

To evaluate the generalizability of HARC beyond OLMoE, we conduct experiments on Qwen3-30B-A3B~\cite{qwen3}, a significantly larger MoE model with 128 experts, 8 of which are active per token. We fine-tune separate models on math and code tasks, and evaluate the merged models on the same benchmarks as in Section~\ref{sec:experiments}.

\begin{table*}[h]
  \centering
  \caption{
    Multi-task performance when merging Qwen3-30B-A3B models on mathematics reasoning and code generation tasks.
  }
    \begin{adjustbox}{width=0.8\textwidth}
    \begin{tabular}{lccccccc}
      \toprule
      \multicolumn{1}{c}{\multirow{2}[2]{*}{Method}} & \multicolumn{3}{c}{Math} & \multicolumn{3}{c}{Code} & \multirow{2}[2]{*}{Overall} \\
      \cmidrule(lr){2-4} \cmidrule(lr){5-7}
      \multicolumn{1}{c}{} & GSM8K & MATH500 & Average & HumanEval+ & MBPP+ & Average &  \\
      \midrule
      Individual & 87.97 & 57.65 & 72.81 & 81.29 & 76.42 & 78.86 & 75.83 \\
      \midrule
      Weight Averaging & 87.86 & 56.78 & 72.32 & 80.30 & 76.03 & 78.17 & 75.24 \\
      \rowcolor{mylightblue} \quad w/ HARC & 87.91 & 56.23 & 72.07 & 81.55 & 76.36 & 78.96 & 75.51 \\
      TIES-Merging & 87.64 & 56.14 & 71.89 & 81.10 & 75.23 & 78.17 & 75.03 \\
      \rowcolor{mylightblue} \quad w/ HARC & 87.76 & 56.98 & 72.37 & 80.76 & 76.05 & 78.41 & 75.39 \\
      DARE & 87.65 & 56.45 & 72.05 & 79.15 & 74.19 & 76.67 & 74.36 \\
      \rowcolor{mylightblue} \quad w/ HARC & 87.61 & 57.35 & 72.48 & 79.85 & 75.18 & 77.52 & 75.00 \\
      WUDI-Merging & 87.34 & 55.18 & 71.26 & 80.22 & 75.61 & 77.91 & 74.59 \\
      \rowcolor{mylightblue} \quad w/ HARC & 87.53 & 56.19 & 71.86 & 80.61 & 76.12 & 78.37 & 75.11 \\
      Fisher Merging & 87.95 & 55.38 & 71.67 & 78.35 & 75.94 & 77.15 & 74.41 \\
      \rowcolor{mylightblue} \quad w/ HARC & 87.99 & 56.94 & 72.47 & 78.58 & 75.38 & 76.98 & 74.72 \\
      \bottomrule
    \end{tabular}
    \end{adjustbox}
  \label{tab:results-qwen}
\end{table*}

As shown in Table~\ref{tab:results-qwen}, HARC consistently improves all five merging strategies, with overall gains ranging from $+0.27$ (Fisher) to $+0.64$ (DARE). Notably, WUDI+HARC achieves the best overall score of $75.11$, a $+0.52$ improvement over the uncalibrated baseline. These results confirm that routing breakdown is a structural consequence of the softmax-Top-$k$ mechanism rather than a model-specific artifact: regardless of expert count or model scale, linearly merging router weights perturbs the delicate logit balance. The consistent gains across architectures also validate that our quadratic approximation (Lemma~\ref{lem:quadratic}) and Hessian-based solution (Theorem~\ref{thm:optimal}) capture a genuine and transferrable property of MoE routing.

\subsection{Complementarity to Post-Merge Fine-Tuning}
\label{sec:appendix_sft}

A natural question is whether post-merge supervised fine-tuning (SFT) can naturally resolve routing breakdown, potentially obviating the need for HARC. To investigate this, we conduct experiments applying SFT after merging with and without HARC. Specifically, we use the same calibration data for fine-tuning the merged model on both math and code tasks.

\begin{table}[h]
  \centering
  \caption{Compatibility of HARC with post-merge SFT. KL Div.\ denotes the average routing KL divergence between the merged and source routers. \textbf{Bold} values in each group indicate the best task performance or the lowest KL Div.}
  \label{tab:sft_compatibility}
    \begin{adjustbox}{width=0.45\columnwidth}
    \begin{tabular}{lcccc}
      \toprule
      Method & Math & Code & Overall & KL Div. \\
      \midrule
      \textbf{WUDI} & 43.44 & 32.73 & 38.08 & 0.00508 \\
      \quad +HARC & 44.01 & 33.90 & 38.96 & \textbf{0.00484} \\
      \quad +SFT & 43.42 & 33.28 & 38.35 & 0.00544 \\
      \quad +HARC+SFT & \textbf{44.62} & \textbf{34.10} & \textbf{39.36} & 0.00529 \\
      \midrule
      \textbf{RegMean} & 43.64 & 30.64 & 37.14 & 0.00456 \\
      \quad +HARC & 44.09 & 31.08 & 37.58 & \textbf{0.00354} \\
      \quad +SFT & 44.17 & 31.18 & 37.67 & 0.00505 \\
      \quad +HARC+SFT & \textbf{44.24} & \textbf{31.39} & \textbf{37.81} & 0.00470 \\
      \bottomrule
    \end{tabular}
    \end{adjustbox}
\end{table}

The results in Table~\ref{tab:sft_compatibility} reveal three key findings. \textbf{First}, while SFT improves task accuracy, it \emph{increases} the routing KL divergence (e.g., from 0.00508 to 0.00544 for WUDI), demonstrating that standard fine-tuning does not correct the router; instead, it forces the entire model to re-converge to a different local optimum. \textbf{Second}, HARC directly \emph{reduces} routing KL divergence, providing targeted structural repair. In scenarios with severe routing breakdown (WUDI), this repair yields substantially larger gains (+0.88) compared to SFT's blind re-convergence (+0.27). \textbf{Third}, HARC+SFT achieves the best overall performance (39.36 for WUDI, 37.81 for RegMean), confirming that HARC provides a structurally sound initialization that allows subsequent fine-tuning to focus on task adaptation rather than compensating for routing damage.

\end{document}